\newcommand\blue{\textcolor{blue}}
\newcommand\red{\textcolor{red}}
\definecolor{mygray}{gray}{.9}
\newcommand{\indep}{\perp\!\!\!\perp}
\begin{document}

\title{Revisiting Information Maximization for Generalized Category Discovery}

\titlerunning{Revisiting Mutual Information Maximization for Generalized Category Discovery}

\author{Zhaorui Tan$^*$\inst{1,2}
    \and
    Chengrui Zhang$^*$\inst{1,2}
     \and
    Xi Yang\inst{1} 
    \and
    Jie Sun\inst{1} \\
    \and
    Kaizhu Huang$^\dag$\inst{3}
    }

\authorrunning{F.~Author et al.}

\institute{
    Xi'an-Jiaotong Liverpool University 
    \and
    Liverpool University 
    \and
    Duke Kunshan University
    }

\maketitle

\begin{abstract}
    Generalized category discovery presents a challenge in a realistic scenario, which requires the model's generalization ability to recognize unlabeled samples from known and unknown categories.
    This paper revisits the challenge of generalized category discovery through the lens of information maximization (InfoMax) with a probabilistic parametric classifier. Our findings reveal that ensuring \textit{independence} between known and unknown classes, {while concurrently assuming a \textit{uniform probability distribution} across all classes,} 
    yields an enlarged margin among known and unknown classes
    that promotes the model's performance. 
    To achieve the aforementioned independence, we propose a novel InfoMax-based method, 
    \textbf{R}egularized \textbf{P}arametric \textbf{I}nfo\textbf{M}ax (RPIM), which adopts pseudo labels to supervise unlabeled samples during InfoMax, while proposing a regularization to ensure the quality of the pseudo labels. 
    Additionally, we introduce novel semantic-bias transformation to refine the features from the pre-trained model instead of direct fine-tuning to rescue the computational costs.  
    Extensive experiments on six benchmark datasets validate the effectiveness of our method. RPIM significantly improves the performance regarding unknown classes, surpassing the state-of-the-art method by an average margin of $3.5\%$. 
    \keywords{Generalized category discovery \and Image classification}
\end{abstract}





\section{Introduction}
\label{sec:intro}


\begin{figure}[t]
    \centering
    \includegraphics[width=0.99\linewidth]{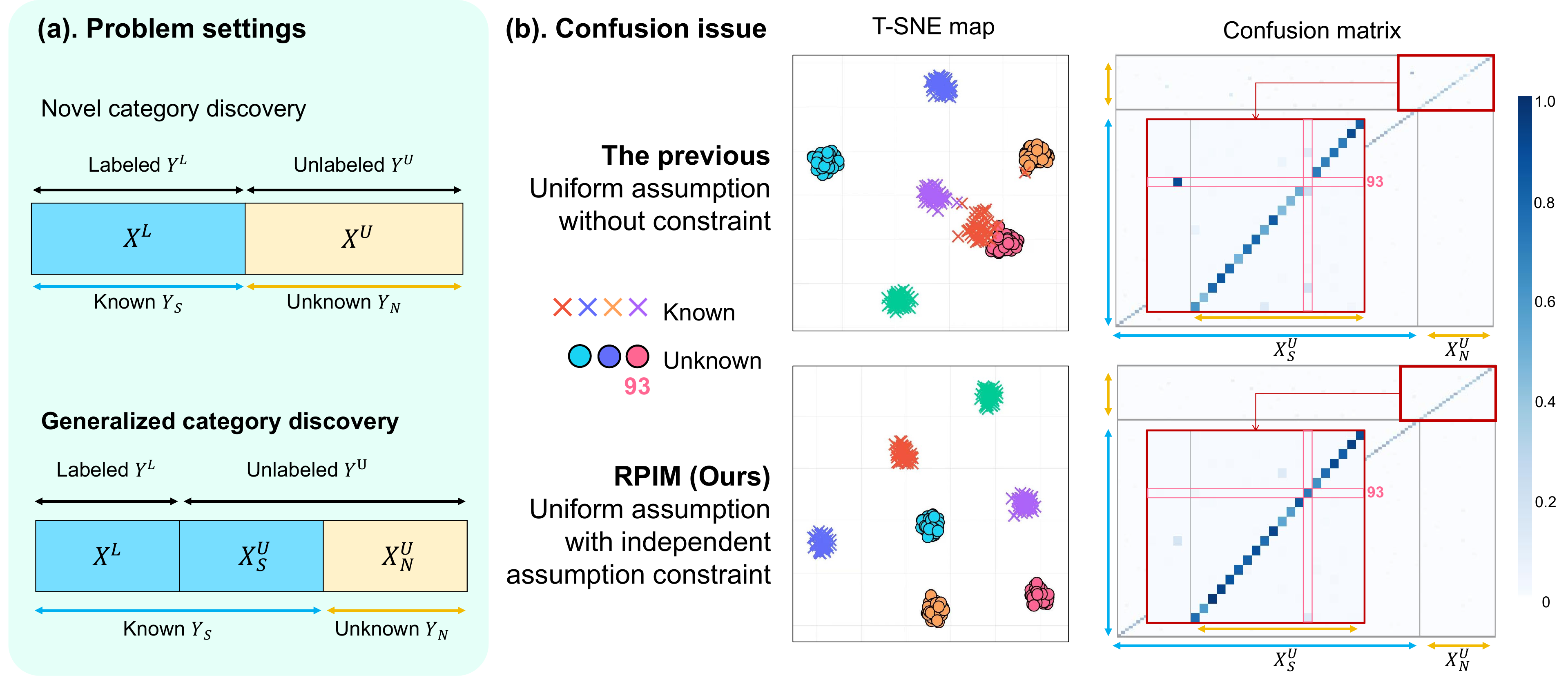}
    \caption{
    (a). Diagram of problem settings. 
    (b) Visualization of the confusion issue on CIFAR100~\cite{krizhevsky2009learning}. Left:  T-SNE map of latent features $Z$ from those classes. Right: Confusion matrix of unlabeled set between known and unknown classes.
    {Solely satisfying the uniform assumption for unconfident predictions causes confusion issues, while our proposed RPIM effectively mitigates.}
    }
    \label{fig:vis_dist}
\end{figure}

Category discovery significantly broadens the application scope of visual recognition by considering scenarios with less human supervision or limited predefined categories; that is, some training samples are unlabeled and may correspond to \textit{unknown classes}, i.e., those not yet defined. This task is particularly crucial for processing large and complex datasets where manual annotation of all categories is impractical, for instance, recognizing products in a supermarket, pathologies in medical
images, vehicles in autonomous driving, etc. Recent efforts~\cite{han2019learning,han2021autonovel,fini2021unified,cao2022openworld} leverage knowledge from known classes to enhance the clustering of unlabeled samples that belong solely to unknown classes, thus establishing the novel category discovery task. As the complexity of the scenario further escalates, the task of generalized category discovery~\cite{vaze2022generalized} relaxes the disjoint assumption between labeled and unlabeled categories. This introduces the challenge of an unlabeled set composed of samples from both known and unknown classes (see the diagram of problem settings in \cref{fig:vis_dist}~(a)).

Existing wisdom for tackling the generalized category discovery bifurcates into two distinct branches, depending on their use or non-use of probabilistic parametric classifiers. Without probabilistic parametric classifiers, popular efforts such as
GCD~\cite{vaze2022generalized} leverage contrastive training and semi-supervised method. However, these approaches often fail to address class imbalances, yielding sub-optimal results on long-tailed datasets. In contrast, recent innovations adopting probabilistic parametric classifiers, such as 
PIM~\cite{chiaroni2023parametric}, endeavor to maintain maximum information from the input for the prediction guided by the \textit{InfoMax principle}~\cite{linsker1988self}. PIM validates the view that InfoMax implicitly enables probabilistic parametric classifiers to maintain uniform class proportions~\cite{krause2010discriminative,boudiaf2020information}. It argues that the uniform assumption\footnote{{The probability of classes is assumed to 
follow the uniform distribution.
}} 
{should be applied to unconfident predictions}.
As such, it effectively mitigates the class-balance bias encoded in standard InfoMax, achieving superior performance across both short-tailed and long-tailed datasets. 

Inspired by the aforementioned perspective, we conduct a series of theoretical validations, {revealing that both the \textit{uniform assumption}
and \textit{independence assumption}\footnote{Known and unknown classes are assumed to be independent of each other.}} are essentially indispensable for generalized category discovery. This entails the presumption that the probability distribution of unconfident known and unknown classes 
{tends to be uniform,}
and mandates an additional independence constraint between known and unknown classes. In the absence of the independence assumption, a ``\textit{confusion issue}'' may arise, manifested as an unclear margin between known and unknown classes, 
probably due to the absence of 
{extra guidance of unlabeled sets.}
This phenomenon can also be observable experimentally (see \cref{fig:vis_dist}~(b)), where, without constraints on the independence, the model confuses the unknown class $93$  with the known classes.

Taking into account the confusion issue,
this paper revisits the problem of generalized category discovery within the scope of InfoMax. 
As one major contribution, we theoretically and empirically validate that integrating reliable pseudo labels as {additional supervision} for unlabeled samples actually fulfills the independence assumption between known and unknown classes. 
Based on this finding, we also propose a regularization term that leverages given labels as anchors for pseudo labels, aiming to reduce their overall empirical risk. 
To further improve the probabilistic parametric classifier and reduce the computational cost, instead of fine-tuning directly, we introduce a simple yet effective semantic-bias transformation to refine the features from the pre-trained model. 
 Overall, these ideas are integrated into a novel approach, termed as \textbf{R}egularized \textbf{P}arametric \textbf{I}nfo\textbf{M}ax (RPIM), which is specially designed for generalized category discovery. 

The key contributions are summarized as follows:
\textbf{(1).}~We revisit the InfoMax for generalized category discovery, revealing that ignoring the independence assumption may lead to a confusion issue, especially when adopting probabilistic parametric classifiers with the uniform assumption {for unconfident predictions}.
\textbf{(2).}~ Built upon theoretical insights,  we utilize given labels as anchors to obtain high-quality pseudo labels to supervise unlabeled samples, ensuring the independence assumption during the InfoMax process.
\textbf{(3).}~We propose a simple yet effective semantic-bias transformation to refine semantic features, able to reduce computational complexity without extensive fine-tuning of the entire pre-trained model. 
\textbf{(4).} Through extensive experiments on six datasets that cover short- and long-tailed distributions, our approach demonstrates high effectiveness. 
Compared to the SOTA PIM, our method shows notable improvements of $10.7\%$, $5.3\%$, and $3.7\%$ in unknown classes on CIFRA100~\cite{krizhevsky2009learning}, CUB~\cite{wah2011caltech}, and Stanford Cars~\cite{krause20133d}, respectively.



\section{Related work}
\label{sec:rw}
\textbf{Mutual information maximization.}
The well-known principle of InfoMax is presented in~\cite{linsker1988self}.  Its probabilistic theoretical base allows for more flexibility and has been widely validated in tasks such as clustering~\cite{krause2010discriminative,hu2017learning,jabi2019deep,info3d,hjelm2018learning} and few-shot learning~\cite{boudiaf2020information,veilleux2021realistic,boudiaf2021few,NEURIPS2020_196f5641}. 
Specifically, Krause~\etal~\cite{krause2010discriminative}  demonstrate 
that an InfoMax-based probabilistic parametric classifier allows
the specification of prior assumptions about expected class proportions. 
This paper unveils that this feature benefits generalized category discovery as the probability distribution of known and unknown classes is assumed to be uniform.
Additionally,
Tschannen~\etal~\cite{TschannenDRGL20} argue that the effectiveness of these methods cannot be attributed to the properties of mutual information, but also strongly depend on the inductive bias in both feature extractor architectures and the parametrization of mutual information estimators. 
Therefore, we introduce an effective transformation to promote the probabilistic parametric classifier.


\textbf{Novel category discovery.}
The novel category discovery task established in DCT~\cite{han2019learning} assumes that unlabeled samples belong to unknown categories. 
RankStats+~\cite{han2021autonovel}
addresses the novel category discovery task problem through a three-stage method that transfers low and high-level knowledge from labeled to unlabeled data with ranking statistics.
UNO+~\cite{fini2021unified} proposes a unified cross-entropy loss,
training the model simultaneously on labeled and unlabeled data by swapping the pseudo labels from their classification heads.
novel category discovery methods can be revised as generalized category discovery~\cite{vaze2022generalized}, which, however, may not guarantee acceptable performance since they assume that the categories of unlabeled samples are all unknown. 

\textbf{Generalized category discovery.}
Extending novel category discovery,
generalized category discovery is first proposed in GCD~\cite{vaze2022generalized} in which the unlabeled sample contains both known and unknown classes. 
ORCA~\cite{cao2022openworld} presents a similar open-world semi-supervised learning problem, tackling it by containing the intra-class variance of known and unknown classes
during training. This paper excels at the generalized category discovery problem.
In particular, GCD tries to estimate the number of categories in unlabeled data. 
It also proposes an approach that consists of contrastive training and 
a semi-supervised k-means-based clustering algorithm. 
However, without a probabilistic parametric classifier,
GCD requires optimizing all parameters in the pre-trained encoder, which is computationally consuming.
Additionally, without consideration of the possible unbalanced classes, GCD yields sub-optimal results on long-tailed data.
Another notable work is PIM~\cite{chiaroni2023parametric}, which introduces InfoMax into generalized category discovery for the first time. 
PIM maximizes the mutual information between the inputs and predictions while aligning labeled samples' predictions with given labels.
Unlike GCD, PIM freezes the encoder and trains only a one-layer probabilistic parametric classifier, significantly reducing computational consumption.  Moreover, its introduced losses are able to cope with imbalanced datasets, 
outperforming GCD on both short- and long-tailed datasets.
This paper also follows the InfoMax principle but reveals that PIM partially minimizes the overall empirical risk during InfoMax.

\section{Methodology}
\label{sec:rethink}
\textbf{Problem settings and preliminaries.}
The overall generalized category discovery setting is shown in \cref{fig:vis_dist}~(a).
Superscripts $*^{L}, *^{U}$ are utilized to denote the association of a variable with the labeled and unlabeled set, respectively. Subscript notation, $*_{S}$ for variables associated with \emph{known} classes and  $*_{N}$ for those aligned with \emph{unknown} classes, is adopted. 
$D$, $X$, and $Y$ denotes the dataset, data samples, and one-hot labels respectively. 

Consider a dataset consisting of the labeled and unlabeled sets,$D = D^L \cup  D^U$. 
$D$ consists of a total of $K$ classes, $|Y^L \cup  Y^U | = K$ and $|Y^L| < K$. Specifically, 
$Y^U_N \cap  Y^L = \emptyset$, $Y^U_S \subset  Y^L$, and $|Y^U_N \cup  Y^U_S| = K$.
For the data sample 
$X_i \in D$, it belongs to a class $y_i$ where $y_i = (y_{i,k})_{k \in \{1, ..., K^L \}}$ and $K^L < K$. 
Under this setting, the Generalized Category Discovery task~\cite{vaze2022generalized} is introduced, aiming to assign unlabeled samples to one of the known and unknown classes. This task joint a semi-supervised classification task for the known classes and a clustering task for the novel classes.

Following~\cite{chiaroni2023parametric},
the inputs that we used are latent features of raw samples produced by pre-trained models.
Thus, for notation simplification, we denote that $X \in R^D$ are the random variables from the latent features space, which belong to a pre-trained encoder, mapping the raw inputs to a latent feature with $D$ dimensions.
{We denote a model $f_\theta: X \to Z \in [0,1]^K$ with trainable parameters $\theta$. 
Note that $f_\theta$ consists of possible transformations and a parametric probabilistic classifier. 
$Z = Z^L\cup Z^U$ is the set of logits of labeled and unlabeled data samples, and $arg\max(Z)$ is the final label prediction.} 
Specifically, $z_i$ represents the predicted probability under all classes of the $i^{th}$ sample;
$H(\cdot)$, $H_c(\cdot, \cdot)$, $I(\cdot;\cdot)$ and $P(\cdot)$ represent entropy, cross-entropy, mutual information, and probability.

\subsection{Revisiting InfoMax for generalized category discovery}
\label{sec:Our approach}




This section intrinsically revisits the problem of InfoMax for generalized category discovery, which leads to the derivation of our proposed RPIM.
First, we presuppose the validity of the uniform and independence assumptions (i.e., {$P(y_i=k)_{k \in \{1,...,K\}} = \frac{1}{K}$} and $Y^U_S\indep Y^U_N$). 
The problem is formulated by taking $Y$ into InfoMax, aiming to find a proper $Z$ where the mutual information among $X, Y$, and $Z$ is maximized. 
Consequently, the overall objective is given as follows:
\begin{equation}
    \label{eq:our_I}
    \max_{\theta} I(X;Y;Z)=I(X;Z) +  I(Y;Z|X).
\end{equation}
Here, the two balance weights are omitted for simplicity. 
Then, $I(Y; Z|X)$ can be expanded by splitting $Z$ into $Z^L$ and $Z^U$:
\begin{equation}
    \label{eq:our_I_expand}
    \max_{\theta} I(X;Y;Z) =I(X;Z)+   I(Y^L; Z^L|X^L) + I(Y^U; Z^U|X^U).
\end{equation}
$I(X; Z) + I(Y^L; Z^L|X^L)$ represents the InfoMax between $X$ and $Z$ 
while constraining the predictions of the labeled set with ground-truth labels. 
\cref{sec:conn} shows that this part has already been studied in previous works, such as PIM~\cite{chiaroni2023parametric}, we make a further study on the previously unexplored term $I(Y^U; Z^U|X^U)$.

Under the premise of the independence assumption $Y^U_S\indep Y^U_N$, $I(Y^U; Z^U|X^U)$ is reformulated as:
\begin{equation}
    \label{eq:indep}
     I(Y^U; Z^U|X^U) = I(Y^U_S; Z^U_S|X^U_S) + I(Y^U_N; Z^U_N|X^U_N) ,
\end{equation}
indicating that maximizing $I(Y^U; Z^U|X^U)$ would enforce the precise and certain prediction for known and unknown classes in the unlabeled set.   This motivates us to seek an empirical form for maximizing $I(Y^U; Z^U|X^U)$.

As the ground-truth labels for ${Z}^U$ are unavailable during training, 
we adopt the Sinkhorn–Knopp algorithm \cite{cuturi2013sinkhorn} to produce soft pseudo labels, i.e., 
$\hat{Y} = Sinkhorn(Z)$.
In $\hat{Y}^U$, $\hat{Y}^U_S$ and $\hat{Y}^U_N$ are naturally independent of each other for discrete classification, which meets the requirements in \cref{eq:indep}.
However, not all pseudo labels are feasible for maximizing $I(Y^U; Z^U|X^U)$, 
due to the possible growth of uncertainty in $\hat{Y}^U$ during optimization.
Thus, only the maximum value in $\hat{y}_{i,k}\in \hat{Y}^U$ 
above the threshold $\mathcal{T}$ are feasible for \cref{eq:our_I_expand}:
\begin{equation}
\label{eq:pesudo_y}
    \hat{Y}^U_{T} =  \hat{Y}^U_{S,{T}} \cup \hat{Y}^U_{N,{T}} :=  \hat{Y}^U \text{ where } \max (\hat{y}_{i,k})_{k\in\{1,..., K\} } > \mathcal{T}.
\end{equation}
Empirically,  $ \mathcal{T}$ is set to $0.5$ for all experiments. \cref{sec:stable regularization} additionally proposes the constraints that ensure the quality of the pseudo labels. The logits associated with $\hat{Y}^U_{T}$ are represented by ${Z}^U_{T}$. 
Substitute $Y^U_{T}$ into $I(Y^U; Z^U|X^U)$, we have:
\begin{equation}
\label{eq:our_I_extend}
    \begin{split}
        I(Y^U; Z^U|X^U) \ge & I(Y^U_{T}; Z^U|X^U)         
        :=                  H(Z^U|X^U) - H(Z^U_{T}|X^U).
    \end{split}
\end{equation}
The inequality is maintained because $Y^U_{T}$ constitutes a subset of $Y^U$. 
\cref{eq:our_I_extend} shows that maximizing $I(Y^U_{T}; Z^U|X^U)$ equates to raise a lower bound of $I(Y^U;$ $Z^U|X^U)$. Maximizing \cref{eq:our_I_extend} inherently promotes the independence between known and unknown classes through a strategy that minimizes the entropy of unlabeled logits with certain pseudo labels and maximizes the entropy of the whole unlabeled logits.
It also ensures a uniform distribution of the unlabeled logits across classes until they are assigned reliable pseudo labels.  

Substituting \cref{eq:our_I_extend} into \cref{eq:our_I} (see derivation details in \cref{app:Mathematical details}), we have:
\begin{equation}
\label{eq:our_H}
\begin{split}
    \max_{\theta} &  \underbrace{
    I(X; Z) +  I(Y^L; Z^L|X^L))
     +  H(Z^U|X^U)}_{\text{Introduced by PIM  
     }}\underbrace{ -  H(Z^U_{T}|X^U)}_{\text{Our proposed $\mathcal{L}_R$ }}.
\end{split}
\end{equation}
Finally, the corresponding loss for $- H(Z^U_{T}|X^U)$ is proposed as follows:
\begin{equation}
\label{eq:{L}_{R}}
        \mathcal{L}_{R} (\theta) =  - \frac{\gamma  }{|X^U_{T}|} \sum\nolimits_{x_i\in X^U_{T}} \sum\nolimits_{k=1}^K z_{i,k} \log z_{i,k},
\end{equation}
where $\gamma \in (0,1]$ controls the weight of the extra introduced $\mathcal{L}_R$. Interestingly, $\gamma$ can be connected to various previous works, which will be detailed in \cref{sec:conn}.


\subsection{Integrating with previous approaches}
\label{sec:conn}
This part expounds on the method of integrating the proposed independence constraints with previous work under the the uniform assumption within unconfident predictions. Specifically,  it details the beneficial effects of the additional constraint on independence to the task. 



\textbf{Previous approaches.}
Current InfoMax methods for generalized category discovery tend to maximize the mutual information between $Z$ and $X$ while containing the conditional probability $z_i$ of labeled samples:
\begin{equation}
    \label{eq:old_I}
        \max_{\theta} I(Z;X)
        \;   \; \text{s.t.}
        \; \arg \max (z_i) =\arg \max  (y_i) , \; \; \forall x_i \in X^L.
\end{equation}
Our analysis primarily focuses on the SOTA method, Parametric InfoMax (PIM)~\cite{chiaroni2023parametric} as an example. We express the objective of PIM in the entropy form:
\begin{equation}
    \label{eq:old_H}
    \max_{\theta} { - H_c({Z}^L, Y^L) + H({Z})   -  \lambda \cdot H({Z}^U | X^U)},
\end{equation}
where $\lambda \in (0,1]$ is searched from a finite set on the dataset through a bi-level optimization to control the weight of $- H({Z}^U | X^U)$.
Accordingly, its empirical loss is written as follows:
\begin{equation}
    \label{eq:loss_old_I}
    \begin{split}
        \mathcal{L}_{pim}(\theta)  = & - \frac{1}{|X^L|} \sum\nolimits_{x_i\in X^L} \sum\nolimits_{k=1}^K z_{i,k} \log y_{i,k}                                                  \\
                            & + \sum\nolimits_{k=1}^K \pi_k \log \pi_k - \frac{\lambda}{|X^U|}  \sum\nolimits_{x_i\in X^U}\sum\nolimits_{k=1}^K  z_{i,k} \log z_{i,k},
    \end{split}
\end{equation}
where $\pi_k = P(\arg \max(Y)=k; \theta)$ denotes the marginal distributions.

In comparison to the standard supervised classification approach
(i.e., maximizing $  - H_c({Z}^L, Y^L)$ that is equivalent to $I({Z}^L; Y^L)$~\cite{boudiaf2020unifying}), 
PIM additionally introduced terms ($H({Z}) - \lambda \cdot H({Z}^U | X^U)$), which enjoys an intuitive meaning: encourage predictions with lower confidence to follow the uniform distribution. Specifically, maximizing $ H({Z})$ forces that all samples are evenly assigned to a class while maximizing $ - \lambda \cdot H({Z}^U | X^U)$ encourages high confidence of the predictions. 
However, maximizing $ - \lambda \cdot H({Z}^U | X^U)$ excels the bias of balanced partitions that may be introduced with maximizing $ H({Z})$.
\cref{eq:our_H} reveals that even if the uniform assumption is applied to the unconfident prediction, it still requires further constraints on the independence between known and unknown classes. 
As illustrated in \cref{fig:vis_dist}~(b), without constrained independence, though samples in each cluster are compact, the margin between known and unknown classes is less evident, causing the confusion issue. 
Our results show that integrating the proposed $\mathcal{L}_{R}$ would ameliorate the confusion issue and promote performance, especially on unknown classes.

\textbf{Simultaneously achieving independence and uniform assumption.} 
We further reformulate RPIM's learning objective \cref{eq:our_H} to ensure its alignment with \cref{eq:old_H} by integrating them together:
\begin{equation}
\label{eq:our_H_conn}
\begin{split}
    \max_{\theta} & \; \cref{eq:old_H}  + \gamma \cdot( H(Z^U|X^U) { - H(Z^U_{T}|X^U)}) \\
     \Rightarrow
     \max_{\theta} &  {-H_c(Z^L, Y^L) + H(Z) - \lambda \cdot ( (1-\gamma )  H(Z^U|X^U)} + \frac{\gamma}{\lambda}  {H(Z^U_{T}|X^U)}).
\end{split}
\end{equation}
Empirically, 
we find that $\gamma$ 
is relatively small, hence  $\lambda (1 - \gamma )\approx \lambda$. Please refer to \cref{app:Mathematical details} for derivation details.  
For further simplification, \cref{eq:our_H_conn} can be  approximated as:
\begin{equation}
\label{eq:our_H_final_conn}
     \max_{\theta}  \underbrace{-H_c(Z^L, Y^L) + H(Z) - \lambda \cdot H(Z^U|X^U)}_{\text{Achieved by $\mathcal{L}_{pim}$}}    \underbrace{ -\lambda \cdot \eta \cdot H(Z^U_{T}|X^U)}_{\text{Achieved by $\mathcal{L}_{R}$}}.
\end{equation}
Benefiting from the searched $\lambda$ in PIM, 
we treat $ \eta$ 
as the hyper-parameter and use $\gamma = \lambda \cdot \eta$ for the weight for $\mathcal{L}_{R}$.
We use $\eta = 0.03$ for all experiments. {Our sensitive analysis also shows that the method is not sensitive to the value of $\eta$. We discuss further advantages brought by the integration as follows.

\textbf{Using $\mathcal{L}_{R}$ with $\mathcal{L}_{pim}$ synergetically certifies better results.}
Intuitively,  $\mathcal{L}_{R}$ increase reliability of unlabeled samples, thus  
\cref{eq:our_H_final_conn} further maximizes the mutual information between unlabeled logits and their certain pseudo labels, promoting the independence between known and unknown classes. 
As our extensive experiments validated,  
this combination
promotes the independence between known and unknown classes, improving overall performance. 
Experimental results also depict that, 
without a reliable $\hat{Y}$,
$\mathcal{L}_{R}$ may compromise the performance of the labeled set.

\textbf{Additionally using $\mathcal{L}_{R}$ further reduces empirical risks.}
Incorporating $\hat{Y}$, the overall risk  for the generalized category discovery problem is written as:
\begin{equation}
\label{eq:risk}
    \begin{split}
        R^{all}:= H_c(Z^L, Y^L) + H_c(Z^U, Y^U) - H(Z) + H(Z|X) + R(\hat{Y}^U), 
    \end{split}
\end{equation}
where $R(\hat{Y}^U)$  denotes the risk introduced by $\hat{Y}^U$. 
Note that the positive coefficients of terms are omitted here. 
It can be seen that using \cref{eq:our_H_final_conn} as the objective will minimize all terms except $R(\hat{Y}^U)$ in $R^{all}$.
We show that using \cref{eq:our_H_final_conn} as the objective leads to a lower supremum of the risk than  \cref{eq:old_H}:
\begin{proposition}
\label{prop:our_better_old}
    Given that $\hat{Y}$  and $\hat{Y}^U_{T}$ are reliable and confident, $R(\hat{Y}^U)$ can be considered constant and therefore omitted. Under this assumption, maximizing
    \cref{eq:our_H_final_conn} leads to a lower supremum of the risk than maximizing \cref{eq:old_H}.
\end{proposition}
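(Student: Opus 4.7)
The plan is to exhibit $H(Z^U_T|X^U)$ as an upper-bounding surrogate for $H_c(Z^U_T,Y^U_T)$, so that the extra RPIM term $-\lambda\eta H(Z^U_T|X^U)$ corresponds to controlling a component of $R^{all}$ that $\mathcal{L}_{pim}$ leaves unconstrained. Since \cref{eq:our_H_final_conn} differs from \cref{eq:old_H} only by this term and $R(\hat Y^U)$ is a constant by the reliability hypothesis, the claim reduces to showing that this additional term tightens the implicit upper bound on the residual of $R^{all}$. I would first decompose $H_c(Z^U,Y^U) = H_c(Z^U_T,Y^U_T) + H_c(Z^U_{\bar T}, Y^U_{\bar T})$ (with the appropriate proportion weights) so that the confident-subset cross-entropy is isolated as the only component that the RPIM-specific term can potentially bound.

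The core technical step is a pointwise inequality linking the extra term to this isolated component. For a prediction $z_i$ with $k^\star = \arg\max(z_i)$ and a one-hot true label $y_i$ on the same class $k^\star$ (which holds under reliability of $\hat Y^U_T$ together with argmax-consistency of the Sinkhorn projection on the above-threshold subset $X^U_T$), the bound $z_{i,j} \leq z_{i,k^\star}$ and monotonicity of $\log$ give $z_{i,j}\log z_{i,j} \leq z_{i,j}\log z_{i,k^\star}$ for every $j$. Summing over $j$ and using $\sum_j z_{i,j}=1$ yields $\sum_j z_{i,j}\log z_{i,j} \leq \log z_{i,k^\star}$, equivalently $H_c(z_i,y_i) \leq H(z_i)$. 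Averaging over $i\in X^U_T$ produces $H_c(Z^U_T,Y^U_T) \leq H(Z^U_T|X^U)$, certifying $H(Z^U_T|X^U)$ as the desired surrogate.

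With this in hand, write $R^{all} = -\mathrm{RPIM}(\theta) + \mathrm{Res}_{\mathrm{RPIM}}(\theta)$ and $R^{all} = -\mathrm{PIM}(\theta) + \mathrm{Res}_{\mathrm{PIM}}(\theta)$, using $H(Z|X)\approx H(Z^U|X^U)$ once labeled predictions are confident so that $H(Z^L|X^L)$ is negligible. Substituting the previous inequality into $\mathrm{Res}_{\mathrm{RPIM}}$ gives $H_c(Z^U_T,Y^U_T) - \lambda\eta H(Z^U_T|X^U) \leq (1-\lambda\eta)\,H(Z^U_T|X^U)$, strictly below the corresponding term $H_c(Z^U_T,Y^U_T) \leq H(Z^U_T|X^U)$ in $\mathrm{Res}_{\mathrm{PIM}}$. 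Every other term in the two residuals coincides, so at any maximizer of the respective objective the implicit upper bound on $R^{all}$ is smaller by at least $\lambda\eta H(Z^U_T|X^U)>0$ under RPIM, which yields the desired strictly lower supremum of the risk.

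The main obstacle is establishing $\arg\max(Z^U_T)=\arg\max(Y^U_T)$ during optimization, which is strictly stronger than ``reliability of $\hat Y^U_T$'' alone. Reliability only supplies $\arg\max(\hat Y^U_T)=\arg\max(Y^U_T)$, leaving the gap between $\arg\max(Z^U_T)$ and $\arg\max(\hat Y^U_T)$; the threshold $\mathcal{T}>0.5$ in \cref{eq:pesudo_y} closes this gap up to a negligible boundary set, since Sinkhorn preserves argmax whenever the top-class mass exceeds $0.5$. If argmax alignment failed on a non-negligible fraction of $X^U_T$, the pointwise inequality could reverse, and the argument would require a probabilistic refinement, e.g.\ a bound of the form $H_c(Z^U_T,Y^U_T) \leq H(Z^U_T|X^U) + \epsilon$ where $\epsilon$ quantifies the mismatch rate.
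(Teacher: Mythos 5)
Your proposal is correct in its core algebra and takes a genuinely different, more detailed route than the paper. The paper's proof is much more minimal: it observes that \cref{eq:our_H_final_conn} and \cref{eq:old_H} differ by a single non-negative penalty term (via the subset entropy inequality $H(Z^U_T|X^U)\leq H(Z^U|X^U)$ from $Z^U_T\subset Z^U$), and then concludes, informally, that \cref{eq:old_H} ``would minimize fewer terms in $R^{all}$'' and hence yields a higher supremum. The paper never establishes \emph{why} the extra term $H(Z^U_T|X^U)$ corresponds to any component of $R^{all}$ --- that connection is asserted, not proved. You fill precisely that gap with the surrogate lemma $H_c(Z^U_T,Y^U_T)\leq H(Z^U_T|X^U)$ via the pointwise inequality $z_{i,j}\log z_{i,j}\leq z_{i,j}\log z_{i,k^\star}$, which is a correct and worthwhile addition: it makes explicit that minimizing the entropy of confident predictions actually controls the cross-entropy risk term $H_c(Z^U,Y^U)$ that $\mathcal{L}_{pim}$ leaves untouched. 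The cost of your stronger argument is the additional hypothesis of argmax alignment $\arg\max(Z^U_T)=\arg\max(Y^U_T)$, which you honestly flag; the paper sidesteps this by folding all uncertainty about the pseudo labels into the assumption ``$\hat{Y}^U_T$ is reliable and confident, so $R(\hat{Y}^U)$ is constant.'' Note that both your proof and the paper's share the same informal final step: comparing suprema of $R^{all}$ across the two objectives requires more than comparing upper bounds on residuals at a fixed $\theta$, since the maximizers differ --- neither proof makes this comparison rigorous. One small caveat worth noting: your upper bound of $H_c(Z^U_T, Y^U_T) - \lambda\eta H(Z^U_T|X^U)$ by $(1-\lambda\eta) H(Z^U_T|X^U)$ replaces a tighter quantity with a looser surrogate on both sides, so the ``strictly below'' comparison is between \emph{bounds}, not values; the argument would be cleaner phrased as ``the residual is reduced by exactly $\lambda\eta H(Z^U_T|X^U)\geq 0$,'' which is in fact what the paper's algebraic identity is trying (imprecisely) to express.
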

\begin{proof}
    Comparing two equations, it can find that:
    \begin{equation}
        \cref{eq:our_H_final_conn} + ( H(Z^U_{T}|X^U) - H(Z^U|X^U)) = \cref{eq:old_H}. 
    \end{equation}
    Since $Z^U_{T}$ is a subset of $Z^U$: $Z^U_{T} \subset Z^U$, it is straightforward that $ H(Z^U_{T}|X^U) \leq  H(Z^U|X^U)$ and $ H(Z^U_{T}|X^U) - H(Z^U|X^U) \leq 0$.
    Therefore, using \cref{eq:old_H} as the objective would minimize fewer terms in $R^{all}$ than \cref{eq:our_H_final_conn}. As such,  \cref{eq:our_H_final_conn} leads to a lower supremum of the risk than \cref{eq:old_H}. See more proof details in \cref{app:Mathematical details}.
    \qed
\end{proof}
{\cref{prop:our_better_old} highlights that the proposed objective \cref{eq:our_H_final_conn} will lead to better results than the previous work using \cref{eq:old_H}.}
{However, it is noted that $R(\hat{Y}^U)$ remains when solely using $\mathcal{L}_{R}$. Therefore, we propose a regularization on pseudo labels to further reduce $R(\hat{Y}^U)$ in \cref{eq:risk} and tights \cref{eq:our_H_final_conn}. 
}

\subsection{Ensuring reliable pseudo labels}
\label{sec:stable regularization}


As pseudo labels $\hat{Y}$ are introduced in 
\cref{eq:pesudo_y,eq:our_I_extend}, 
a constraint should be applied to reduce $R(\hat{Y}^U)$.
Specifically, the constraint should  
ensure that $\hat{Y}$ is reliable for $\mathcal{L}_{R}$
during the optimization to mitigate the possible side effects of pseudo labels. 
Since $Y^U$ for $\hat{Y}^U$ cannot be accessed, we take advantage of $Y^L$ and $\hat{Y}^L$ and show that maximizing $ I(\hat{Y}^L, {Y}^L; Z^L|X^L)$ enforces an overall better quality of $\hat{Y}$.

\begin{proposition}
\label{eq:why_Ls}
    Assume the independence assumption $\hat{Y}^L \indep \hat{Y}^U$ holds. 
    Maximizing $ I(\hat{Y}^L,$ $ {Y}^L; Z^L|X^L)$ is equivalent to maximizing $I(\hat{Y}, {Y}^L; Z|X)$.
\end{proposition}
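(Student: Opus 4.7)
The plan is to establish the equivalence by decomposing the global mutual information $I(\hat{Y}, Y^L; Z|X)$ into the labeled-side term $I(\hat{Y}^L, Y^L; Z^L|X^L)$ plus a residual that does not depend on the optimization. First I would write $\hat{Y} = \hat{Y}^L \cup \hat{Y}^U$, $Z = Z^L \cup Z^U$, and $X = X^L \cup X^U$, and use two facts: (i) the hypothesis $\hat{Y}^L \indep \hat{Y}^U$ combined with the natural independence of labeled and unlabeled instances, so that joint entropies factor across the two partitions; and (ii) $Y^L$ is a function of the labeled instances only, hence carries no information about $Z^U$ given $X^U$.

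The central calculation is then a chain-rule expansion. I would write
\begin{equation*}
I(\hat{Y}, Y^L; Z|X) = H(\hat{Y}^L, \hat{Y}^U, Y^L \mid X) - H(\hat{Y}^L, \hat{Y}^U, Y^L \mid X, Z),
\end{equation*}
apply the independence assumptions to both entropy terms so each factors as a sum over the labeled and unlabeled partitions, and regroup to obtain
\begin{equation*}
I(\hat{Y}, Y^L; Z|X) = I(\hat{Y}^L, Y^L; Z^L|X^L) + I(\hat{Y}^U; Z^U|X^U).
\end{equation*}
Note that $Y^L$ does not appear in the unlabeled term because of independence fact (ii).

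The final step is to argue that the residual $I(\hat{Y}^U; Z^U|X^U)$ is effectively constant with respect to the maximization. Because $\hat{Y}^U$ is produced deterministically from the logits by Sinkhorn--Knopp, we have $I(\hat{Y}^U; Z^U|X^U) = H(\hat{Y}^U \mid X^U)$; since Sinkhorn enforces the uniform class marginal that underpins the entire analysis of Section~3.1, this entropy is pinned (up to the threshold effects in \cref{eq:pesudo_y}) and contributes no usable gradient. Hence maximizing the global objective coincides with maximizing $I(\hat{Y}^L, Y^L; Z^L|X^L)$, giving the desired equivalence.

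The main obstacle I expect is making the last step airtight. Treating Sinkhorn as a deterministic map obscures the fact that it is recomputed from the current $Z$, so $H(\hat{Y}^U \mid X^U)$ is only approximately fixed. A cleaner argument would either (a) bound the residual between two constants using the Sinkhorn balance constraints and the threshold $\mathcal{T}$, or (b) appeal to the decomposition \cref{eq:our_H_final_conn} to say that $I(\hat{Y}^U; Z^U|X^U)$ is already driven by $\mathcal{L}_R$, so up to an additive constant the two maximization problems share the same stationary points with respect to $\theta$. Either route finishes the proof, but route (b) fits the paper's narrative better and is the one I would write up.
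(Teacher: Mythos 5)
Your route diverges from the paper's, and the divergence exposes a real gap. The paper never attempts the exact decomposition you aim for; it proves only a one-sided inequality,
\[
I(\hat{Y}, Y^L; Z|X) \;\ge\; I(\hat{Y}^U; Z^U|X^U) + I(\hat{Y}^L, Y^L; Z^L|X^L),
\]
via superadditivity of mutual information under $\hat{Y}^L \indep \hat{Y}^U$, and then discards the non-negative term $I(\hat{Y}^U; Z^U|X^U)$ to conclude $I(\hat{Y}, Y^L; Z|X) \ge I(\hat{Y}^L, Y^L; Z^L|X^L)$. The phrase ``equivalent to maximizing'' is used in the same loose, lower-bound sense as in \cref{eq:our_I_extend}: pushing up the labeled-side term pushes up a lower bound of the global term.

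Your central step is the claimed identity $I(\hat{Y}, Y^L; Z|X) = I(\hat{Y}^L, Y^L; Z^L|X^L) + I(\hat{Y}^U; Z^U|X^U)$, obtained by factoring both $H(\hat{Y}^L,\hat{Y}^U,Y^L \mid X)$ and $H(\hat{Y}^L,\hat{Y}^U,Y^L \mid X, Z)$ across the labeled/unlabeled split. The first factorization is defensible under your stated assumptions, but the second is not: conditioning on $Z$ --- which is produced by a shared parametric map $f_\theta$ applied to both $X^L$ and $X^U$ --- can induce dependence between $\hat{Y}^L$ and $\hat{Y}^U$ even when they are unconditionally independent (the usual explaining-away effect). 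What survives is only the subadditivity $H(\hat{Y}^L,\hat{Y}^U,Y^L \mid X, Z) \le H(\hat{Y}^L,Y^L \mid X^L, Z^L) + H(\hat{Y}^U \mid X^U, Z^U)$, which gives exactly the paper's $\ge$, not your $=$. Your closing step, treating $I(\hat{Y}^U; Z^U|X^U)$ as constant because Sinkhorn fixes the class marginal, is also not airtight --- you flag this yourself --- and the paper sidesteps the issue entirely by never needing the residual to cancel. To repair the proof, drop the equality claim: derive the superadditivity inequality from the independence hypothesis, invoke non-negativity of $I(\hat{Y}^U; Z^U|X^U)$, and phrase the conclusion as a lower-bound statement rather than a literal equivalence.
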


\begin{proof}
Since $\hat{Y}^L \indep \hat{Y}^U$, we have:
\begin{equation}
\label{eq:why}
    \begin{split}
        I(\hat{Y}, {Y}^L; Z|X) \ge
        I(\hat{Y}^U; Z^U|X^U) + I(\hat{Y}^L, {Y}^L ; Z^L|X^L).
    \end{split}
\end{equation}
Since $I(\hat{Y}^U; Z^U|X^U)$ $\ge 0$, it has $ I(\hat{Y}, {Y}^L; Z|X) \ge I(\hat{Y}^L, {Y}^L ; Z^L|X^L)$.
The derivation details of \cref{eq:why} can be seen in \cref{app:Mathematical details}. 
\qed
\end{proof}

{
Consequently, maximizing $I(\hat{Y}, {Y}^L; Z|X)$ implies maximizing the mutual information between pseudo labels and ground-truth labels, which would lead to overall better $\hat{Y}$ so does $\hat{Y}^{U}$. 
Since maximizing mutual information can be changed as minimizing cross-entropy~\cite{boudiaf2020unifying}, we replace $I(\hat{Y}^L, {Y}^L; Z^L|X^L)$ with $H_c( mix(\hat{Y}^L, {Y}^L), Z^L )$, where $mix(\cdot, \cdot)$ denotes a mixing method. We adopt a weighted combination between $\hat{Y}^L$ and ${Y}^L$: $mix(\hat{Y}^L, {Y}^L) = (1-\beta )\cdot{Y}^L + \beta \cdot \hat{Y}^L$, where $\beta$ controls the mixing strength. 
Note that we set $\beta=0.05$ for all experiments. 
Finally, we have the empirical loss that ensures a stable regularization for $H_c( mix(\hat{Y}^L, {Y}^L), Z^L )$ as:
\begin{equation}
        \mathcal{L}_{S} (\theta) =  - \frac{1 }{|X^L|} \sum\nolimits_{x_i\in X^L} \sum\nolimits_{k=1}^K z_{i,k} \log ((1-\beta) y_{i,k} + \beta \hat{y}_{i,k}).
\end{equation}
Intuitively, $\mathcal{L}_{S}$ implicitly aligns $\hat{Y}$ with the anchor $Y^L$, thus promoting the accuracy of known classes, 
yielding better convergence for the optimization process. 
Our experiments validate that using $\mathcal{L}_{S}$ with $\mathcal{L}_{R}$  can lead to further improvements. Importantly, $\mathcal{L}_{S}$ alleviates the possible performance decline caused by $\mathcal{L}_{R}$ of unlabeled samples in known classes as well as resulting in consistent improvements across various datasets.


\subsection{Semantic-bias transformation for latent features refining}
\label{sec:Semantic-bias transformation}

To align with the PIM method and save computational consumption, we employ the latent features outputted from the pre-trained model without directly fine-tuning it. 
Similar to the previous work~\cite{TschannenDRGL20}, our experiments also show that the quality of the latent features $X$ limits the model's performance, and refining $X$ can benefit the downstream generalized category discovery task.
For better notations, we denote $f_\theta$ as $g \circ h$ where $h: X \to X' \in \mathbb{R}^D$ conducts the semantic refining transformations for $X$ and $g: X' \to Z \in \mathbb{R}^K$ where values of variables in each dimension are in the range $[0,1]$.
$h$ acts as the probabilistic parametric classifier. 
{
We notice that the deep latent features from the pre-trained models, which usually represent semantic embeddings, are linearized, normalized, and located in an Affine space~\cite{bengio2013better,upchurch2017deep,wang2021regularizing,tan2023semantic}. 
Therefore, a transformation that does not violate the aforementioned characteristics and maintains the original semantics is critical.
}

One possible approach is to use a transformation such as one linear layer. However, our experiments in  \cref{sec:Ablation studies and analysis}
indicate that using the linear layer with or without bias will lead to performance degradation.
{This phenomenon may result from the hypothesis that the transformation is not the same across samples and should be conditioned by the inputs. Thus, the transformation should take inputs as conditions rather than the aforementioned common linear transformation.} 


In order to obtain a proper transformation, we propose the semantic-bias transformation. Empirically,  one linear layer $mlp(\cdot)$ that takes $X$ as the input is used to learn the bias: $b = mlp(X)$. By adding $b$ to $X$, the final output is normalized: $h(X) = ({x + b})/{||x + b||_2}$.
It is worth noting that $b$ is initialized as zeros. 
Such an approach enables an affine transformation of the semantics, reducing over-transforming risks.
As seen in \cref{sec:Ablation studies and analysis}, our semantic-bias transformation consistently improves results across diverse datasets by enlarging the margin between all classes compared to other potential transformations.

\section{Experiments}

\subsection{Experimental settings}
\textbf{Competitors.}
We compare our proposed method with existing generalized category discovery methods:
GCD~\cite{vaze2022generalized},
and
PIM~\cite{chiaroni2023parametric}.
In particular, PIM based on information maximization is the current state-of-the-art (SOTA) generalized category discovery method.
Additionally, the traditional machine learning method, k-means~\cite{macqueen1967classification}; three novel category discovery methods:
RankStats+~\cite{han2021autonovel},
UNO+~\cite{fini2021unified},
ORCA~\cite{cao2022openworld}; and several information maximization methods:
RIM~\cite{krause2010discriminative},
and TIM~\cite{boudiaf2020information}
are adapted for generalized category discovery as competitors.
The results of the modified novel category discovery methods are reported in~\cite{vaze2022generalized}, and the modified information maximization methods are reported in~\cite{chiaroni2023parametric}.

\textbf{Datasets.}
Six image datasets are adopted to validate the feasibility of our proposed RPIM compared to other competitors. These datasets include three well-known generic object recognition datasets, CIFAR10~\cite{krizhevsky2009learning},
CIFAR100~\cite{krizhevsky2009learning} 
and 
ImageNet-100~\cite{deng2009imagenet}. Additionally, 
two fine-grained datasets CUB~\cite{wah2011caltech} and Stanford Cars~\cite{krause20133d} that encompass fine-grained categories posited to be more challenging to differentiate than the more generic object classes; as well as the long-tail dataset Herbarium19~\cite{tan2019herbarium} that  
mirroring real-world scenarios with pronounced class imbalances, large intra-class variations, and low inter-class variations
are also incorporated.
These diverse datasets are intended to comprehensively validate our approach's effectiveness compared to other competitors.
Following GCD and
PIM~\cite{vaze2022generalized,chiaroni2023parametric}, the initial training set of each dataset is partitioned into labeled and unlabeled subsets. To elaborate, half of the image samples affiliated with the known classes are allocated to the labeled subset, while the remaining half are assigned to the unlabeled subset. The unlabeled subset also includes all image samples from the remaining classes in the original dataset, designated as novel classes. Consequently, the unlabeled subset comprises instances from $K$ different classes. 

{\textbf{Training details.} Consistent with PIM, we utilize latent features extracted by the feature encoder DINO (VIT-B/16)~\cite{caron2021emerging} that is pre-trained on ImageNet~\cite{deng2009imagenet} through self-supervised learning. Our experiments unveil that the model exhibits sensitivity to hyper-parameters, especially weight decay. To address this, a methodology is devised for weight decay searching, involving the construction of smaller labeled and unlabeled subsets derived solely from the labeled data. 
Specifically, the labeled set is divided into two subsets: a sub-labeled set comprising half of the known classes and a sub-unlabeled set encompassing all known classes, adhering to the generalized category discovery setting. 
For details on the optimized weight decay values, please see \cref{app:More experimental details}. It is worth noticing that our PRIM improves both situations where the searched weight decay values are used or not (see more results and analysis in \cref{app:more results}).}

\textbf{Evaluation metric.}
{Following with prior works \cite{vaze2022generalized,chiaroni2023parametric}, we use the proposed accuracy metric from \cite{vaze2022generalized} of all classes, known classes, and unknown classes for evaluation. Please see a detailed description of the experimental setup in \cref{app:More experimental details}. 
The implementation code will be made publicly available following the acceptance of our manuscript.}


\subsection{Main results}
In this part, we compare RPIM with previous methods across six datasets. The averaged results across all datasets are shown in \cref{fig:all_res_avg}, and the detailed results of each dataset are shown in \cref{tab:all_res_main}.  Please refer to more results and discussions in \cref{app:more results}.

\begin{table}[t]
\centering
\caption{Main results: Accuracy scores 
across fine-grained and generic datasets of our RPIM and other competitors. The best results of each group are highlighted in \textbf{bold}. Please refer to \cref{fig:all_res_avg} for averaged results on all datasets.
}
\label{tab:all_res_main}
\resizebox{1\textwidth}{!}{%
    \begin{tabular}{l|ccc|ccc|ccc}
        \toprule
                                                                        & \multicolumn{3}{c|}{\textbf{CUB}}     & \multicolumn{3}{c|}{\textbf{Stanford Cars}} & \multicolumn{3}{c}{\textbf{Herbarium19}}                                                                                                  \\
        \hline Approach & \multicolumn{1}{c}{{\;\;\;\;\;All\;\;\;\;\;}} & \multicolumn{1}{c}{{\;Known\;}} & \multicolumn{1}{c|}{{\;Unknown\;}} & \multicolumn{1}{c}{{\;\;\;\;\;All\;\;\;\;\;}} & \multicolumn{1}{c}{{\;Known\;}} & \multicolumn{1}{c|}{{\;Unknown\;}} & \multicolumn{1}{c}{{\;\;\;\;\;All\;\;\;\;\;}} & \multicolumn{1}{c}{{\;Known\;}} & \multicolumn{1}{c}{{\;Unknown\;}}           \\
        \hline
        K-means                                                         & 34.3                        & 38.9                              & 32.1                             & 12.8          & 10.6          & 13.8          & 12.9          & 12.9          & 12.8          \\
        RankStats+ \cite{han2021autonovel} (TPAMI-21)                   & 33.3                        & 51.6                              & 24.2                             & 28.3          & 61.8          & 12.1          & 27.9          & 55.8          & 12.8          \\
        UNO+ \cite{fini2021unified} (ICCV-21)                           & 35.1                        & 49.0                              & 28.1                             & 35.5          & \textbf{70.5} & 18.6          & 28.3          & \textbf{53.7}          & 14.7          \\
        ORCA \cite{cao2022openworld} (ICLR-22)                          & 27.5                        & 20.1                              & 31.1                             & 15.9          & 17.1          & 15.3          & 22.9          & 25.9          & 21.3          \\ 
        ORCA \cite{cao2022openworld} - ViTB16                           & 38.0                        & 45.6                              & 31.8                             & 33.8          & 52.5          & 25.1          & 25.0          & 30.6          & 19.8          \\ %
        GCD \cite{vaze2022generalized} (CVPR-22)                        & \textbf{51.3}                        & \textbf{56.6}                              & \textbf{48.7}                             & \textbf{39.0}          & 57.6          & \textbf{29.9}          & \textbf{35.4}          & 51.0          & \textbf{27.0}          \\
        \hline
        \multicolumn{1}{c}{}& \multicolumn{9}{c}{InfoMax based methods} \\ \hline
        RIM \cite{krause2010discriminative} (NeurIPS-10) (semi-sup.)    & 52.3                        & 51.8                              & 52.5                             & 38.9          & 57.3          & 30.1          & 40.1          & \textbf{57.6} & 30.7          \\
        TIM \cite{boudiaf2020information} (NeurIPS-20)                  & 53.4                        & 51.8                              & 54.2                             & 39.3          & 56.8          & 30.8          & 40.1          & 57.4          & 30.7          \\ \hline
        PIM~\cite{chiaroni2023parametric} (ICCV-23)                                                           & 62.7                        & 75.7                              & 56.2                             & 43.1          & 66.9          & 31.6          & 42.3          & 56.1          & 34.8          \\
        \rowcolor{mygray}\textbf{RPIM (Ours)}                                                            & \textbf{66.8}               & \textbf{77.3}                    & \textbf{61.5}                    & \textbf{45.8} & \textbf{67.5}          & \textbf{35.3}  & \textbf{43.0} & 57.4          & \textbf{35.2} \\ 
        \toprule
        \toprule
                                                                        & \multicolumn{3}{c|}{\textbf{CIFAR10}} & \multicolumn{3}{c|}{\textbf{CIFAR100}}      & \multicolumn{3}{c}{\textbf{ImageNet-100}}                                                                                                 \\
        \hline Approach & \multicolumn{1}{c}{{All}} & \multicolumn{1}{c}{{Known}} & \multicolumn{1}{c|}{{Unknown}} & \multicolumn{1}{c}{{All}} & \multicolumn{1}{c}{{Known}} & \multicolumn{1}{c|}{{Unknown}} & \multicolumn{1}{c}{{All}} & \multicolumn{1}{c}{{Known}} & \multicolumn{1}{c}{{Unknown}}           \\
        \hline
        K-means                                                         & 83.6                        & 85.7                              & 82.5                             & 52.0          & 52.2          & 50.8          & 72.7          & 75.5          & 71.3          \\
        RankStats+ \cite{han2021autonovel} (TPAMI-21)                   & 46.8                        & 19.2                              & 60.5                             & 58.2          & \textbf{77.6}          & 19.3          & 37.1          & 61.6          & 24.8          \\
        UNO+ \cite{fini2021unified} (ICCV-21)                           & 68.6                        & \textbf{98.3}                     & 53.8                             & 69.5          & 80.6          & 47.2          & 70.3          & \textbf{95.0}          & 57.9          \\
        ORCA \cite{cao2022openworld} (ICLR-22)                          & 88.9                        & 88.2                              & 89.2                             & 55.1          & 65.5          & 34.4          & 67.6          & 90.9          & 56.0          \\ 
        ORCA \cite{cao2022openworld} - ViTB16                           & \textbf{97.1}               & 96.2                              & \textbf{97.6}                    & 69.6          & {76.4}          & 56.1          & \textbf{76.5}          & {92.2}          & \textbf{68.9}          \\ %
        GCD \cite{vaze2022generalized} (CVPR-22)                        & 91.5                        & 97.9                              & 88.2                             & \textbf{70.8}          & \textbf{77.6}          & \textbf{57.0}          & 74.1          & 89.8          & 66.3          \\
        \hline
        \multicolumn{1}{c}{} & \multicolumn{9}{c}{InfoMax based methods} \\ \hline
        RIM \cite{krause2010discriminative} (NeurIPS-10) (semi-sup.)    & 92.4                        & \textbf{98.1}                              & 89.5                             & 73.8          & 78.9          & 63.4          & 74.4          & 91.2          & 66.0          \\
        TIM \cite{boudiaf2020information} (NeurIPS-20)                  & 93.1                        & 98.0                              & 90.6                             & 73.4          & 78.3          & 63.4          & 76.7          & 93.1          & 68.4          \\
        \hline
        PIM~\cite{chiaroni2023parametric} (ICCV-23)                                                          & 94.7                        & 97.4                              & 93.3                             & 78.3          & 84.2          & 66.5          & 83.1          & \textbf{95.3} & 77.0          \\
        \rowcolor{mygray} \textbf{RPIM (Ours)}                                                            & \textbf{95.3}                       & {97.6}                              & \textbf{94.1}                              & \textbf{82.7}  & \textbf{85.4}  & \textbf{77.2}  & \textbf{83.2}  & 95.2          & \textbf{77.2} \\ 
        \bottomrule
    \end{tabular}
}
\end{table}
\begin{figure}[!t]
  \centering
  \includegraphics[width=0.8\linewidth]{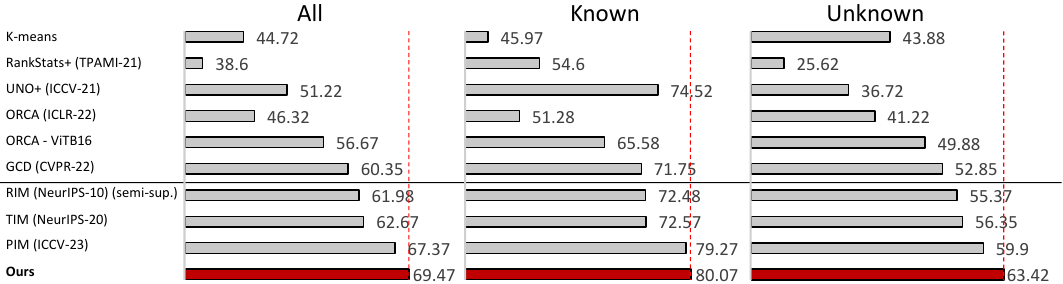}
  \caption{Averaged results across all datasets of 
    k-means~\cite{macqueen1967classification},
    RankStats+~\cite{han2021autonovel},
    UNO+~\cite{fini2021unified},
    ORCA~\cite{cao2022openworld},
    GCD~\cite{vaze2022generalized},
    RIM~\cite{krause2010discriminative},
    TIM~\cite{boudiaf2020information},
    PIM~\cite{chiaroni2023parametric},
    and our proposed RPIM of all classes, known classes, and unknown classes.  
  }
  \label{fig:all_res_avg}
\end{figure}


\textbf{Comparison to Previous Methods.} As illustrated in \cref{fig:all_res_avg}, RPIM outperforms all prior methods in terms of average accuracy for both known and unknown classes. Detailed comparisons in \cref{tab:all_res_main} reveal that RPIM sets new SOTA results on five out of six datasets. Unlike other InfoMax-based methods, our approach consistently enhances performance across all datasets for unknown classes, while also improving the accuracy of known classes in five datasets. 
These results highlight the effectiveness of RPIM in alleviating the confusion problem inherent in earlier models.


\textbf{Comparison to the Current SOTA Method.} RPIM achieves a notable improvement over the current SOTA method, PIM, as evidenced by average accuracy gains exhibited in \cref{fig:all_res_avg}: a $3.52\%$ increase for unknown classes and a $0.8\%$ rise for known classes, cumulating in an overall enhancement of $2.1\%$ across all classes. As demonstrated in \cref{tab:all_res_main}, significant advancements are observed with increases of $10.7\%$, $5.3\%$, and $3.7\%$ for unknown classes on CIFAR100, CUB, and Stanford Cars datasets, respectively. Furthermore, RPIM surpasses PIM across all unknown classes and the majority of known classes throughout all datasets, evidencing \cref{prop:our_better_old}.

\textbf{Remarks of our proposed method.} 
An observation from the result, our method markedly enhances the model's ability to discover unknown classes.
This improvement 
does not detract from, but enhances the model's performance on known classes under most circumstances,
underscoring the balanced generalization ability of RPIM. 
Such outcomes support our argument that the independence between known and unknown classes should be constrained.

\subsection{Ablation studies and analysis}
\label{sec:Ablation studies and analysis}

This section delves into the ablation studies to dissect the contributions of individual components within RPIM. 
The quantitative ablation results are presented in \cref{tab:ablation_res} while  \cref{fig:ablation_avg} reports the average results of ablation studies.
For a comprehensive sensitivity and hyper-parameter analysis, please refer to details in \cref{app:more results}.

\textbf{Semantic-bias transformation $h$ promotes probabilistic parametric classifier.} 
It can be observed that using $h$ consistently brings improvements, especially on unknown classes. However, the improvement on both known and unknown classes
is not certified without the proposed $\mathcal{L}_{R}$ and $\mathcal{L}_{S}$.

\textbf{$\mathcal{L}_{R}$ promotes performance on the unknown classes.} \cref{tab:ablation_res}, \cref{tab:ablation_res_use_fixed_wd} and \cref{fig:ablation_avg}  show that additionally using $\mathcal{L}_{R}$ can boost performance on unknown classes across different settings and datasets. However, without using $\mathcal{L}_{S}$, $\mathcal{L}_{R}$, the model may compromise the performance of known classes. 
Overall, $\mathcal{L}_{R}$ leads to improvements in all classes due to its significant promotions in unknown classes in comparison to its compromised performance of known classes, thus verifying our analysis in \cref{prop:our_better_old}.  

\textbf{$\mathcal{L}_{S}$ alleviates compromises in known classes brought by $\mathcal{L}_{R}$ and even improve the performance.} \cref{tab:ablation_res,tab:ablation_res_use_fixed_wd,fig:ablation_avg} show all results indicate that using  $\mathcal{L}_{S}$ with  $\mathcal{L}_{R}$ ensuring better convergence than solely using  $\mathcal{L}_{S}$. Specially, $\mathcal{L}_{S}$ alleviates the side-effect of $\mathcal{L}_{R}$  on known classes. This implies that $\mathcal{L}_{S}$ produces reliable pseudo labels
validating \cref{eq:why_Ls}.

\begin{table}[t]
\caption{Ablation results: Accuracy scores across fine-grained and generic datasets of each setting. 
The best results are highlighted in \textbf{bold}.
Improvement and degradation in our approach from the baseline are highlighted in \red{red$\uparrow$} and \blue{blue$\downarrow$}, respectively.
$h$ denotes the proposed semantic-bias transformation. 
}
\label{tab:ablation_res}
\resizebox{\textwidth}{!}{%
\begin{tabular}{llccc|ccc|ccc|ccc}
\toprule
&& \multicolumn{3}{c|}{}  & \multicolumn{3}{c|}{\textbf{CUB}}     & \multicolumn{3}{c|}{\textbf{Stanford Cars}} & \multicolumn{3}{c}{\textbf{Herbarium19}}                                                                  \\   \hline
 ID & Settings & \textbf{$h$} & \textbf{$\mathcal{L}_{R}$} & \textbf{$\mathcal{L}_{S}$} & \multicolumn{1}{c}{{\;\;\;\;\;\;All\;\;\;\;\;\;}} & \multicolumn{1}{c}{{Known}} & \multicolumn{1}{c|}{{Unknown}} & \multicolumn{1}{c}{{\;\;\;\;\;\;All\;\;\;\;\;\;}} & \multicolumn{1}{c}{{Known}} & \multicolumn{1}{c|}{{Unknown}} & \multicolumn{1}{c}{{\;\;\;\;\;\;All\;\;\;\;\;\;}} & \multicolumn{1}{c}{{Known}} & \multicolumn{1}{c}{{Unknown}} \\ \hline
1 & Baseline ($\mathcal{L}_{pim}$)&                 &                         &                 & 62.7                             & 75.7                             & 56.2                             & 43.1                             & 66.9                             & 31.6                             & 42.3                             & 56.1                             & 34.8                             \\ \hline
2 & Baseline tuned ($\mathcal{L}_{pim}$)&                 &                         &                 & 64.8                             & 75.1                             & 59.6                             & 42.6                             & 59.3                              & 34.6                              & 43.1                             & 57.6                             & 35.4                             \\ 
3 &$\mathcal{L}_{pim}$+$\mathcal{L}_{R}$ &                 & \checkmark                         &                 & 66.2                             & 75.1                             & 61.8                             & 42.3                             & 57.6                             & 34.9                             & 43.1                             & 56.7                             & 35.8                             \\
4 & $\mathcal{L}_{pim}$+$\mathcal{L}_{S}$ &                 &                         & \checkmark                 & 64.5                             & 74.5                             & 59.4                             & 43.8                             & 59.6                             & 36.2                             & 42.8                             & \textbf{58.0}                               & 34.6                             \\
5 & $\mathcal{L}_{pim}$+$\mathcal{L}_{R}$+$\mathcal{L}_{S}$ &                 & \checkmark                         & \checkmark                 & 66.3                             & 76.7                             & 61.1                             & 43.7                             & 60.3                             & 35.7                             & 42.8                             & \textbf{58.0}                               & 34.6                             \\
\hline
\multicolumn{4}{c}{} & \multicolumn{9}{c}{Using semantic transformation} \\ \hline
6 & $\mathcal{L}_{pim}$+$h$ & \checkmark                 &                         &                 & 64.9                             & 76.7                             & 58.9                             & 44.7                             & 65.8                             & 34.6                             & 43.0                               & 57.4                             & 35.2                             \\
7 & $\mathcal{L}_{pim}$+$h$+$\mathcal{L}_{R}$ & \checkmark                 & \checkmark                         &                 & 66.3                             & 76.2                             & 61.3                             & 44.4                             & 65.4                             & 34.3                             & 43.0                               & 56.6                             & \textbf{35.7}                             \\
8 & $\mathcal{L}_{pim}$+$h$+$\mathcal{L}_{S}$ & \checkmark                 &                         & \checkmark                 & 64.9                             & 75.5                             & 59.7                             & \textbf{45.8}                             & 66.7                             & \textbf{35.7}                             & \textbf{43.2}                             & 57.4                             & 35.6                             \\
\rowcolor{mygray} 9 &  \textbf{Ours} ($\mathcal{L}_{pim}$+$h$+$\mathcal{L}_{R}$+$\mathcal{L}_{S}$) & \textbf{\checkmark}        & \textbf{\checkmark}                & \textbf{\checkmark}        & \textbf{66.8}~\red{($4.1\uparrow$)}               & \textbf{77.3}~\red{($1.6\uparrow$)}                    & \textbf{61.5}~\red{($5.3\uparrow$)}                  & \textbf{45.8}~\red{($2.7\uparrow$)} & \textbf{67.5}~\red{($0.6\uparrow$)}          & {35.3}~\red{($3.7\uparrow$)}  & {43.0}~\red{($0.7\uparrow$)} & 57.4~\red{($1.3\uparrow$)}          & {35.2}~\red{($0.6\uparrow$)} \\ 
                        \toprule
                        \toprule
& & \multicolumn{3}{c|}{}           & \multicolumn{3}{c|}{\textbf{CIFAR10}} & \multicolumn{3}{c|}{\textbf{CIFAR100}}      & \multicolumn{3}{c}{\textbf{ImageNet-100}}                                                           \\
\hline
ID &  Settings  & \textbf{$h$} & \textbf{$\mathcal{L}_{R}$} & \textbf{$\mathcal{L}_{S}$} & \multicolumn{1}{c}{{All}} & \multicolumn{1}{c}{{Known}} & \multicolumn{1}{c|}{{Unknown}} & \multicolumn{1}{c}{{All}} & \multicolumn{1}{c}{{Known}} & \multicolumn{1}{c|}{{Unknown}} & \multicolumn{1}{c}{{All}} & \multicolumn{1}{c}{{Known}} & \multicolumn{1}{c}{{Unknown}} \\
  \hline
1 & Baseline ($\mathcal{L}_{pim}$)&                 &                         &                 & 94.7                             & 97.4                             & 93.3                             & 78.3                             & 84.2                             & 66.5                             & 83.1                             & \textbf{95.3}                             & 77.0                               \\ \hline
2 & Baseline tuned ($\mathcal{L}_{pim}$)&                 &                         &                 & 95.0                               & 96.1                             & 94.4                             & 80.3                             & 84.6                             & 71.8                             & 83.5                             & 95.0                               & 77.7                             \\ 
3 &$\mathcal{L}_{pim}$+$\mathcal{L}_{R}$ &                 & \checkmark                         &                 & 94.9                             & 96.0                               & 94.4                             & 80.0                               & 83.2                             & 73.6                             & 83.5                             & 95.0                               & 77.7                             \\
4 & $\mathcal{L}_{pim}$+$\mathcal{L}_{S}$ &                 &                         & \checkmark                 & 94.9                             & 97.4                             & 93.7                             & 81.4                             & 85.7                             & 72.9                             & 83.6                             & 95.0                               & 77.9                             \\
5 & $\mathcal{L}_{pim}$+$\mathcal{L}_{R}$+$\mathcal{L}_{S}$  &                 & \checkmark                         & \checkmark                 & 94.9                             & 97.4                             & 93.6                             & 81.4                             & 85.7                             & 72.9                             & 83.6                             & 95.0                               & 77.9                             
\\
\hline
\multicolumn{4}{c}{} & \multicolumn{9}{c}{Using semantic transformation} \\ \hline
6 & $\mathcal{L}_{pim}$+$h$ & \checkmark                 &                         &                 & 94.7                             & 97.5                             & 93.3                             & 80.8                             & 84.6                             & 73.1                             & 83.1                             & 95.0                               & 77.1                             \\
7 & $\mathcal{L}_{pim}$+$h$+$\mathcal{L}_{R}$ & \checkmark                 & \checkmark                         &                 & 94.7                             & 97.5                             & 93.2                             & 80.0                               & 83.2                             & 73.6                             & 83.1                             & 95.0                               & 77.1                             \\
8 & $\mathcal{L}_{pim}$+$h$+$\mathcal{L}_{S}$& \checkmark                 &                         & \checkmark                 & 95.2                             & \textbf{97.7}                             & 94.0                               & 82.6                             & \textbf{85.4}                             & 76.8                             & 83.2                             & {95.2}                             & 77.2                             \\
\rowcolor{mygray} 9 & \textbf{Ours} ($\mathcal{L}_{pim}$+$h$+$\mathcal{L}_{R}$+$\mathcal{L}_{S}$) & \textbf{\checkmark}        & \textbf{\checkmark}                & \textbf{\checkmark}        & \textbf{95.3}~\red{($0.6\uparrow$)}                        & 97.6~\red{($0.2\uparrow$)}                              & \textbf{94.1}~\red{($0.8\uparrow$)}                              & \textbf{82.7}~\red{($4.4\uparrow$)}  & \textbf{85.4}~\red{($1.2\uparrow$)}  & \textbf{77.2}~\red{($10.7\uparrow$)}  & \textbf{83.2}~\red{($0.1\uparrow$)}  & 95.2~\blue{($0.1\downarrow$)}          & \textbf{77.2}~\red{($0.2\uparrow$)} \\ 
                        \bottomrule
\end{tabular}%
}
\end{table}

\begin{figure}[!t]
\vspace{-3pt}
    \centering
    \includegraphics[width=0.9\linewidth]{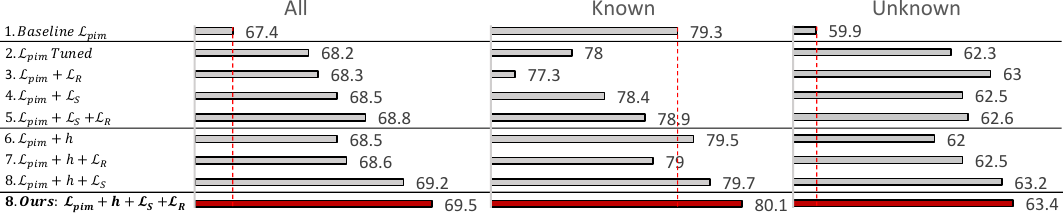}
    \caption{Averaged results across all datasets of ablation studies and our proposed PRIM. 
    }
    \label{fig:ablation_avg}
\vspace{-9pt}
\end{figure}

\textbf{Overall effectiveness of all components.}
Integrating all proposed components leads to optimal performance, chiefly by enhancing the separability between known and unknown classes through synergistic effects. Specifically, the class assigned to pink exhibits a distinct separation from that marked in red when all components are employed, as opposed to scenarios lacking this integration as shown in \cref{fig:vis_dist}. This is quantitatively reflected in our performance metrics, where an average enhancement of $0.8\%$ for known classes and $3.5\%$ for unknown classes across all evaluated datasets to baseline. The overall improvements across all datasets also suggest the effectiveness of our approach.

\textbf{Semantic-bias transformations analysis.}
{Compared to tuning the whole pre-trained model, such as DINO (VIT-B/16) with 85M parameters, our introduced semantic-bias transformations with only 0.26M parameters significantly save the computational cost.}
A comprehensive comparison of possible transformations across various datasets reveals our approach's superiority in ensuring consistent performance enhancements, as documented in \cref{tab:semantic}. On the contrary, alternative strategies fail to certify such consistency. For instance, the employment of a linear layer without bias significantly improves accuracy on the ImageNet-100 dataset but degrades performance on the CIFAR10 and CIFAR100 datasets. \cref{fig:transformations} further elucidates that the default linear layers, whether biased or not, often fail to establish clear decision boundaries between classes. In contrast, our method consistently facilitates improvements. Please see the visualization of latent features with different transformations in \cref{app:more results}.

\begin{table}[t]
\caption{Semantic transformation: Comparison between different transformations. 
All results better than the baseline are highlighted in \textbf{bold}. 
}
\label{tab:semantic}
\resizebox{\textwidth}{!}{%
\begin{tabular}{l|ccc|ccc|ccc}
\toprule
 & \multicolumn{3}{c|}{\textbf{CUB}} & \multicolumn{3}{c|}{\textbf{Stanford Cars}} & \multicolumn{3}{c}{\textbf{Herbarium19}} \\ \hline
Setting & \multicolumn{1}{c}{{\;\;\;\;All\;\;\;\;}} & \multicolumn{1}{c}{{Known}} & \multicolumn{1}{c|}{{Unknown}} & \multicolumn{1}{c}{{\;\;\;\;All\;\;\;\;}} & \multicolumn{1}{c}{{Known}} & \multicolumn{1}{c|}{{Unknown}} & \multicolumn{1}{c}{{\;\;\;\;All\;\;\;\;}} & \multicolumn{1}{c}{{Known}} & \multicolumn{1}{c}{{Unknown}} \\ \hline
Baseline & 62.7 & 75.7 & 56.2 & 43.1 & 66.9 & 31.6 & 42.3 & 56.1 & 34.8 \\ \hline
Linear layer without bias & 59.3 & 74.1 & 51.9 & 39.4 & 63.8 & 27.6 & 41.2 & 53.9 & 34.3 \\
Linear layer with bias & 52.6 & 67.2 & 45.3 & \textbf{45.6} & 66.4 & \textbf{35.6} & 40.9 & 55.6 & 33.0 \\
Learned input conditioned weight and bias & \textbf{64.1} & 73.8 & \textbf{59.3} & 39.0 & 64.6 & 26.6 & \textbf{42.4} & \textbf{56.5} & \textbf{34.9} \\
\rowcolor{mygray}\textbf{Ours}: Learned input conditioned  bias only & \textbf{66.8} & \textbf{77.3} & \textbf{61.5} & \textbf{45.8} & \textbf{67.5} & \textbf{35.3} & \textbf{43.0} & \textbf{57.4} & \textbf{35.2} \\ \hline
 & \multicolumn{3}{c|}{\textbf{CIFAR10}} & \multicolumn{3}{c|}{\textbf{CIFAR100}} & \multicolumn{3}{c}{\textbf{ImageNet-100}} \\ \hline
Setting & \multicolumn{1}{c}{{All}} & \multicolumn{1}{c}{{Known}} & \multicolumn{1}{c|}{{Unknown}} & \multicolumn{1}{c}{{All}} & \multicolumn{1}{c}{{Known}} & \multicolumn{1}{c|}{{Unknown}} & \multicolumn{1}{c}{{All}} & \multicolumn{1}{c}{{Known}} & \multicolumn{1}{c}{{Unknown}} \\ \hline
Baseline & 94.7 & 97.4 & {93.3} & 78.3 & 84.2 & 66.5 & 83.1 & {95.3} & 77 \\ \hline
Linear layer without bias & 60.8 & 58.8 & 61.8 & \textbf{81.1} & \textbf{84.7} & \textbf{73.8} & \textbf{85.7} & 95.2 & \textbf{80.9} \\
Linear layer with bias & 81.2 & 58.3 & 92.6 & 77.6 & \textbf{84.7} & 63.4 & \textbf{85.1} & 95.2 & \textbf{80} \\
Learned input conditioned weight and bias & 95 & \textbf{97.7} & 93.6 & 81.8 & \textbf{85.4} & 74.5 & 82.5 & 95.2 & 76.1 \\
\rowcolor{mygray}\textbf{Ours}: Learned input conditioned  bias only & \textbf{95.3} & \textbf{97.6} & \textbf{94.1} & \textbf{82.7} & \textbf{85.4} & \textbf{77.2} & \textbf{83.2} & 95.2 & \textbf{77.2} \\
\bottomrule
\end{tabular}%
}
\vspace{-9pt}
\end{table}

\section{Conclusion} 
{Based on the probabilistic parametric classifier under InfoMax, we reveal that applying uniform probability distribution assumption on unconfident predictions is insufficient. Without constraining the independence between known and unknown classes, a confusion issue may emerge and the performance would be compromised.}
Our RPIM alleviates the confusion issue by incorporating pseudo labels as extra supervision and proposes a loss to promote the pseudo labels' quality.
Additionally, we propose a pragmatic semantic-bias transformation to refine semantic features for promoting the probabilistic parametric classifier.
Rigorous theoretical and empirical evaluation indicates that our RPIM establishes new SOTA results.

\textbf{Limitations.} 
Similar to other current existing generalized category discovery methods, our method also faces the limitation that the models require access to the entire target unlabeled dataset at the test. 
However, with a feasible approach that can scale up the $Y$ space, our proposed losses and semantic-aware transformation can be extended in such application scenarios.

\bibliographystyle{splncs04}
\bibliography{main.bib}

\newpage
\appendix


\section{Mathematical details}
\label{app:Mathematical details}

\textbf{More derivations of \cref{eq:our_I_extend}.}
\begin{equation}
\label{eq:our_I_extend_details}
    \begin{split}
        I(Y^U; Z^U|X^U) \ge & I(Y^U_{T}; Z^U|X^U)             \\
        =                   & H(Z^U|X^U) - H(Z^U|X^U,Y^U_{T}) .
    \end{split}
\end{equation}
Due to that $Z^U_{T}$ is selected based on $Y^U_{T}$ so that $H(Z^U|X^U,Y^U_{T}):=H(Z^U_{T}|X^U) $,
\cref{eq:our_I_extend_details} can be further defined as:
\begin{equation}
        H(Z^U|X^U) - H(Z^U,Y^U_{T}|X^U) 
        :=                 H(Z^U|X^U) - H(Z^U_{T}|X^U).
\end{equation}

\textbf{More derivation details of \cref{eq:our_H}.}
\begin{equation}
    \label{eq:our_I_detials}
    \begin{split}
      & \max_{\theta} I(X;Y;Z) \\
    =&  I(X;Z)+   I(Y;Z|X) \\
    = & I(X;Z) +   I(Y^L;Z^L|X)  +   I(Y^U;Z^U|X) \\
     = & I(X;Z) +   I(Y^L;Z^L|X)  +   H(Z^U|X^U) -  H(Z^U_{T}|X^U).
    \end{split}
\end{equation}

\textbf{More derivations of \cref{eq:why}.}
\begin{equation}
\label{eq:why_details}
    \begin{split}
        &I(\hat{Y}^U; Z^U|X^U) + I(\hat{Y}^L; Z^L|X^L) \le I(\hat{Y}; Z|X) \\
         \Rightarrow 
         &I(\hat{Y}^U; Z^U|X^U) + I(\hat{Y}^L; Z^L|X^L) + I({Y}^L; Z^L|X^L) \\ & \le I(\hat{Y}; Z|X) + I({Y}^L; Z^L|X^L)\\
        \Rightarrow
        &I(\hat{Y}^U; Z^U|X^U) + I(\hat{Y}^L, {Y}^L ; Z^L|X^L) \le I(\hat{Y}, {Y}^L; Z|X).
    \end{split}
\end{equation}

\textbf{More derivations of \cref{eq:our_H_conn}}

\begin{equation}
\begin{split}
    \max_{\theta} & \; \cref{eq:old_H}  + \gamma \cdot( H(Z^U|X^U) { - H(Z^U_{T}|X^U)}) \\
     \Rightarrow
     \max_{\theta} &  -H_c(Z^L, Y^L) + H(Z) - \lambda \cdot H(Z^U|X^U) + \gamma  \cdot (H(Z^U|X^U) -  {H(Z^U_{T}|X^U)}), \\
     = 
     \max_{\theta} &  {-H_c(Z^L, Y^L) + H(Z) - \lambda \cdot ( (1-\gamma )  H(Z^U|X^U)} - \frac{\gamma}{\lambda}  {H(Z^U_{T}|X^U)}).
\end{split}
\end{equation}
Empirically, we find that $\eta = \frac{\gamma}{\lambda}$ locates in range $[0.01,  0.05]$ yields acceptable results. 
Since $\lambda$ is in the range $(0, 1]$
it can be observed that $\gamma \ge \eta$ and thus $\lambda(1-\gamma ) \approx \lambda$.

\textbf{Proof  of \cref{prop:our_better_old}.}
\begin{proof}
    Comparing two equations, it can be found that:
    \begin{equation}
        \cref{eq:our_H_final_conn} + \lambda \cdot \gamma \cdot( H(Z^U_{T}|X^U) - H(Z^U|X^U)) = \cref{eq:old_H}, 
    \end{equation}
    Since $Z^U_{T}$ is a subset of $Z^U$ : $Z^U_{T} \subset Z^U$, it is straightforward that: $ H(Z^U_{T}|X^U) \leq  H(Z^U|X^U)$ and $ H(Z^U_{T}|X^U) - H(Z^U|X^U) \leq 0$. 
    Due to
    \begin{equation}
        \lambda, \gamma >0, - \lambda \cdot \gamma \cdot ( H(Z^U_{T}|X^U) 
    - H(Z^U|X^U)) \ge 0.
    \end{equation}
    Therefore, using \cref{eq:old_H} as the objective would minimize fewer terms in $R^{all}$ than \cref{eq:our_H_final_conn}:
    \begin{equation}
    \begin{split}
            &\underbrace{\sup\min[\cref{eq:our_H_final_conn}]}_{\text{supremum of risk when $\min$  \cref{eq:our_H_final_conn} }}   \\
            \ge & \underbrace{\sup \min[ \cref{eq:old_H} ]}_{\text{supremum of risk when $\min$ \cref{eq:old_H} }}.
    \end{split}
    \end{equation}
    In other words,  \cref{eq:our_H_final_conn} leads to a lower supremum of the risk than \cref{eq:old_H}. 
    This completes the proof.
    \qed
\end{proof}

\section{More experimental details}
\label{app:More experimental details}
\textbf{Parameter searching.}
To conduct parameter searching, we split labeled samples and constructed a `smaller' sub-labeled and sub-unlabeled set. Specifically, we take samples under $50\%$ of known classes as the sub-unlabeled samples from unknown classes; furthermore, we take $25\%$ samples from the other $50\%$ known classes as the sub-unlabeled samples from known classes. The left samples are treated as sub-labeled samples. Then,  the hyper-parameters are searched on the sub-labeled and un-labeled sets.

We found that the values of weight decay affect the performance significantly. 
The weight decay value searched from the list $[0.001, 0.002, 0.005, 0.01,$ $0.02, 0.05]$ that has the best performance on the sub-unlabeled set is chosen; due to that, the labeled and unlabeled sets are doubled-size of the subsets, its data manifold is more complex when using the complete dataset. 
Therefore, the chosen weight decay is divided by two for the final training. 
The searched weight decay values are exhibited as \cref{tab:tuned weight decay}.

\begin{table}[ht]
\caption{Tuned weight decay values for each dataset.}
\label{tab:tuned weight decay}
\resizebox{\columnwidth}{!}{%
\begin{tabular}{l|cccccc}
\toprule 
                     & CUB    & Standford Cars & Herbarium19 & CIFAR10 & CIFAR100 & ImageNet-100 \\ \hline
Tuned weighted decay & 0.02/2 & 0.02/2         & 0.02/2      & 0.05/2  & 0.005/2  & 0.005/2     \\ 
\bottomrule
\end{tabular}%
}
\end{table}

\textbf{Statistics of datasets.}
We present the statistics of datasets in \cref{tab:stats}.
Be noted that Herbarium19 is a long-tailed dataset, which
reflects a real-world use case with severe class imbalance
along with large intra-class and low inter-class variations.

\begin{table}[t]
\centering
\caption{Statistics of datasets.}
\label{tab:stats}
\scriptsize
\begin{tabular}{p{1.5cm}|cccccc}
\toprule 
        & CUB  & Standford Cars & Herbarium19 & CIFAR10 & CIFAR100 & ImageNet-100 \\
         \hline
$|Y^L|$ & 100  & 98             & 341         & 5       & 80       & 50           \\
$|D^L|$ & 1.5K & 2.0K           & 8.9K        & 12.5K   & 20K      & 31.9K        \\ \hline
$|Y^U|$ & 200  & 196            & 683         & 10      & 100      & 100          \\
$|D^U|$ & 4.5K & 6.1K           & 25.4K       & 37.5K   & 30K      & 95.3K  \\
 \bottomrule
\end{tabular}%
\end{table}

\section{More results and discussions}
\label{app:more results}

\textbf{More visualization for the confusion issue.}
\cref{fig:hist} visualizes the confusion issue in terms of density histogram. It is clear that without the constraints on the independence assumption between known and unknown classes, the latent feature distribution of $93$ overlaps with known classes. Our approach significantly alleviates this issue.  

\begin{figure}[t]
  \centering
  \includegraphics[width=0.9\linewidth]{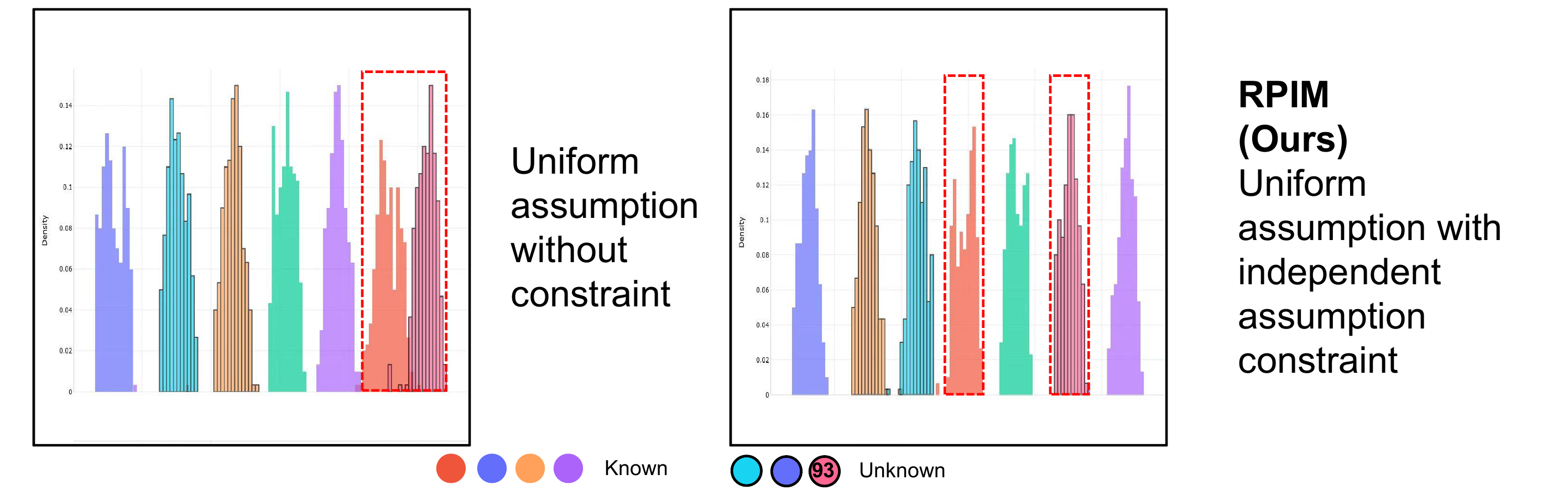}
  \caption{Density histogram of latent features $Z$ of unlabeled samples from the known and unknown classes on CIFAR100. 
  }
  \label{fig:hist}
\end{figure}

\begin{table}[ht]
\caption{Ablation results: Accuracy scores across fine-grained and generic datasets of each setting. Note that all other experiments use \textbf{fixed weight decay=$0.01$}. 
The best results are highlighted in \textbf{bold}.
Improvement and degradation in our approach from baseline (PIM) are highlighted in \red{red$\uparrow$} and \blue{blue$\downarrow$}, respectively.
$h$ denotes the proposed semantic transformation.
It can be seen that even using fixed weight decay, our RPIM still brings competitive results. 
}
\label{tab:ablation_res_use_fixed_wd}
\resizebox{\textwidth}{!}{%
\begin{tabular}{lccc|ccc|ccc|ccc}
\toprule
 &  &  &  & \multicolumn{3}{c|}{\textbf{CUB}}     & \multicolumn{3}{c|}{\textbf{Stanford Cars}} & \multicolumn{3}{c}{\textbf{Herbarium19}} \\ \hline
Settings & $h$ & $\mathcal{L}_{R}$ & $\mathcal{L}_{S}$ &  \multicolumn{1}{c}{{\;\;\;\;\;\;All\;\;\;\;\;\;}} & \multicolumn{1}{c}{{Known}} & \multicolumn{1}{c|}{{Unknown}} & \multicolumn{1}{c}{{\;\;\;\;\;\;All\;\;\;\;\;\;}} & \multicolumn{1}{c}{{Known}} & \multicolumn{1}{c|}{{Unknown}} & \multicolumn{1}{c}{{\;\;\;\;\;\;All\;\;\;\;\;\;}} & \multicolumn{1}{c}{{Known}} & \multicolumn{1}{c}{{Unknown}} \\ \hline
Baseline ($\mathcal{L}_{pim}$) &  &  &  & 62.7 & 75.7 & 56.2 & 43.1 & 66.9 & 31.6 & 42.3 & 56.1 & 34.8 \\ \hline
$\mathcal{L}_{pim}$+$\mathcal{L}_{R}$ &  & \checkmark &  & 62.5 & 76.3 & 55.6 & 43.9 & 64.8 & 33.7 & 42.1 & 55.8 & 34.8 \\
$\mathcal{L}_{pim}$+$\mathcal{L}_{S}$ &  &  & \checkmark & 60.2 & 72.8 & 53.9 & 42.1 & 66.4 & 30.4 & 42.2 & 56.4 & 34.5 \\
$\mathcal{L}_{pim}$+$\mathcal{L}_{R}$+$\mathcal{L}_{S}$ &  & \checkmark & \checkmark & 61.3 & 73.4 & 55.3 & 42.6 & 65.5 & 31.5 & 41.7 & 56.1 & 33.9 \\
\hline
\multicolumn{3}{c}{} & \multicolumn{9}{c}{Using semantic transformation} \\ \hline
$\mathcal{L}_{pim}$+$h$ & \checkmark &  &  & 64.9 & 76.7 & 58.9 & 44.7 & 65.8 & 34.6 & 43 & 57.4 & 35.2 \\
$\mathcal{L}_{pim}$+$h$+$\mathcal{L}_{R}$ & \checkmark & \checkmark &  & 66.3 & 76.2 & 61.3 & 44.4 & 65.4 & 34.3 & 43 & 56.6 & 35.7 \\
$\mathcal{L}_{pim}$+$h$+$\mathcal{L}_{S}$ & \checkmark &  & \checkmark & 64.9 & 75.5 & 59.7 & \textbf{45.8} & 66.7 & \textbf{35.7} & \textbf{43.2} & \textbf{57.4} & \textbf{35.6} \\
\rowcolor{mygray} \textbf{Ours} ($\mathcal{L}_{pim}$+$h$+$\mathcal{L}_{R}$+$\mathcal{L}_{S}$) & \checkmark & \checkmark & \checkmark & \textbf{66.8}~\red{($4.1\uparrow$)} & \textbf{77.3}~\red{($1.6\uparrow$)} & \textbf{61.5}~\red{($5.3\uparrow$)} & \textbf{45.8}~\red{($2.7\uparrow$)} & \textbf{67.5}~\red{($0.6\uparrow$)} & 35.3~\red{($3.7\uparrow$)} & 43.0~\red{($0.7\uparrow$)} & \textbf{57.4}~\red{($1.4\uparrow$)} & 35.2~\red{($0.4\uparrow$)} \\ \hline
 &  &  & & \multicolumn{3}{c|}{\textbf{CIFAR10}} & \multicolumn{3}{c|}{\textbf{CIFAR100}}      & \multicolumn{3}{c}{\textbf{ImageNet-100}}  \\ \hline
Settings & $h$ & $\mathcal{L}_{R}$ & $\mathcal{L}_{S}$ & \multicolumn{1}{c}{{\;\;\;\;\;\;All\;\;\;\;\;\;}} & \multicolumn{1}{c}{{Known}} & \multicolumn{1}{c|}{{Unknown}} & \multicolumn{1}{c}{{\;\;\;\;\;\;All\;\;\;\;\;\;}} & \multicolumn{1}{c}{{Known}} & \multicolumn{1}{c|}{{Unknown}} & \multicolumn{1}{c}{{\;\;\;\;\;\;All\;\;\;\;\;\;}} & \multicolumn{1}{c}{{Known}} & \multicolumn{1}{c}{{Unknown}} \\ \hline
Baseline ($\mathcal{L}_{pim}$) &  &  &  & 94.7 & 97.4 & 93.3 & 78.3 & 84.2 & 66.5 & \textbf{83.1} & 95.3 & \textbf{77} \\ \hline
$\mathcal{L}_{pim}$+$\mathcal{L}_{R}$ &  & \checkmark &  & 94.7 & 97.4 & 93.4 & 78.3 & 84.2 & 66.6 & 83.1 & 95.3 & 77 \\
$\mathcal{L}_{pim}$+$\mathcal{L}_{S}$ &  &  & \checkmark & 95.4 & 97.3 & 94.4 & 77.4 & 84.7 & 62.8 & 81.1 & 95.5 & 73.9 \\
$\mathcal{L}_{pim}$+$\mathcal{L}_{R}$+$\mathcal{L}_{S}$ &  & \checkmark & \checkmark & 95.4 & 97.3 & 94.5 & 77.4 & 84.6 & 62.8 & 81.1 & 95.5 & 73.9 \\
\hline
\multicolumn{3}{c}{} & \multicolumn{9}{c}{Using semantic transformation} \\ \hline
$\mathcal{L}_{pim}$+$h$ & \checkmark &  &  & 95 & 97.5 & 93.7 & 78.8 & 84.1 & 68.2 & 80.4 & 95.3 & 72.9 \\
$\mathcal{L}_{pim}$+$h$+$\mathcal{L}_{R}$ & \checkmark & \checkmark &  & 94.9 & 97.5 & 93.6 & \textbf{78.9} & 84.1 & \textbf{68.6} & 82.2 & 95.3 & 75.7 \\
$\mathcal{L}_{pim}$+$h$+$\mathcal{L}_{S}$ & \checkmark &  & \checkmark & 95.5 & \textbf{97.7} & 94.4 & 78.5 & \textbf{84.6} & 66.3 & 81.4 & \textbf{95.5} & 74.3 \\
\rowcolor{mygray} \textbf{Ours} ($\mathcal{L}_{pim}$+$h$+$\mathcal{L}_{R}$+$\mathcal{L}_{S}$) & \checkmark & \checkmark & \checkmark & \textbf{95.6}~\red{($0.9\uparrow$)} & \textbf{97.7}~\red{($0.3\uparrow$)} & \textbf{94.5}~\red{($1.2\uparrow$)} & 78.6~\red{($0.3\uparrow$)} & \textbf{84.6}~\red{($0.4\uparrow$)} & 66.7~\red{($0.2\uparrow$)} & 81.4~\blue{($1.7\downarrow$)} & \textbf{95.5}~\red{($0.2\uparrow$)} & 74.3~\blue{($2.7\downarrow$)} \\
\bottomrule
\end{tabular}%
}
\end{table}

\textbf{Influence of tuned weight decay.}
\cref{tab:ablation_res} exhibits the results across all datasets using tuned weight decay, except the baseline rows. Combining \cref{tab:ablation_res} and \cref{tab:ablation_res_use_fixed_wd}, it shows that the tuned weight decay leads to general improvements across all datasets and all settings. Notably, our approach leads to significant improvements in both using fixed and tuned weight decay, further validating RPIM's feasibility.

\textbf{Sensitive analysis.}
\cref{tab:sensitive} exhibits results across all datasets using different values of $\gamma$. It can be seen that the results across all settings are stable, especially the average results across all datasets, indicating that our approach is not very sensitive to the hyper-parameter $\gamma$. 
These results also further validate the certified efficacy of $\mathcal{L}_{R}$ as proofed in \cref{prop:our_better_old}.

\textbf{Effect of seed.} Our method is not sensitive to the value of seeds. We try different seeds, including $[0,1,2,3,4,5]$, and all the results are the same.

\textbf{Visualization of semantic-bias transformations.}
Figure \ref{fig:transformations} further elucidates that the default linear layers, whether biased or not, often fail to establish clear decision boundaries between classes. In contrast, our method consistently facilitates improvements.

\textbf{Comparing to more transformations.} 
We stack more linear layers 
to refine the latent features to validate their efficacy. As shown in \cref{tab:mlp_tranformtions}, more layers may lead to severe semantic collapse, resulting in dramatic performance degradation.

\begin{figure}[t]
  \centering
  \includegraphics[width=0.77\linewidth]{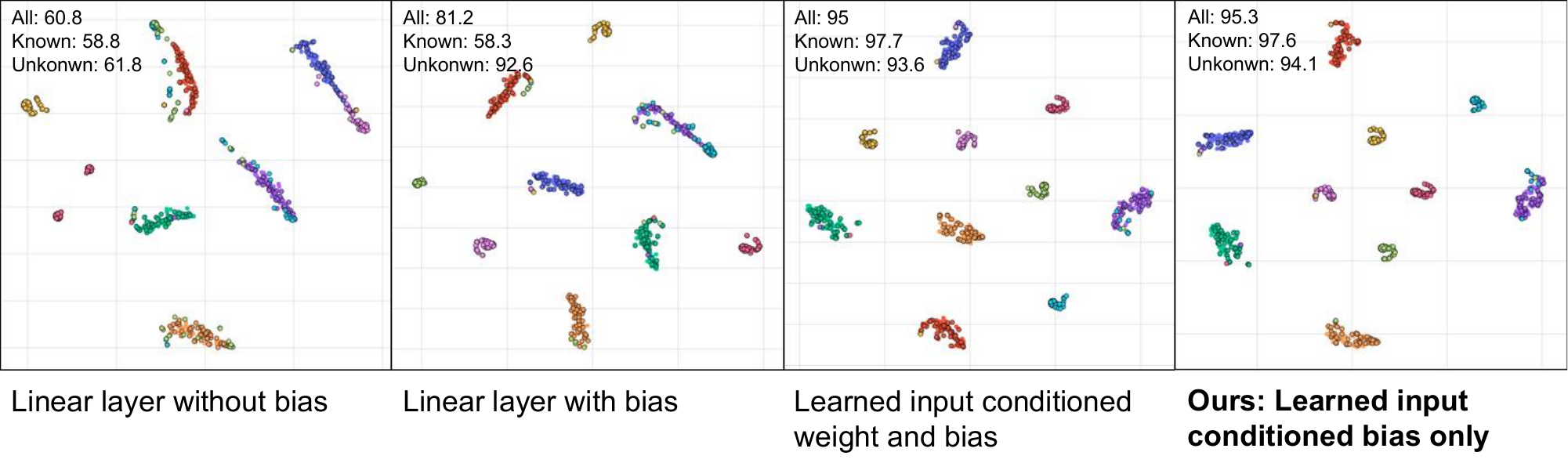}
  \caption{T-SNE map of unlabeled data latent features $Z^L$ from models that use different transformations trained on CIFRA10 dataset. Different colors represent different classes. It can be seen that our proposed semantic-bias transformation leads to the best results.  
  }
    \label{fig:transformations}
\end{figure}

\begin{table}[t]
\caption{Sensitive analysis: Results of using different values for $\eta$ for weighting $\mathcal{L}_{R}$.
Our used final value $\eta=0.03$ is highlighted. It can be seen that PRIM is not sensitive to the value of $\eta$.
}
\label{tab:sensitive}
\resizebox{\textwidth}{!}{%
\begin{tabular}{l|ccc|ccc|ccc|ccc}
\toprule
\multicolumn{1}{l|}{Setting} & \multicolumn{3}{c|}{\textbf{Average}} & \multicolumn{3}{c|}{\textbf{CIFAR10}} & \multicolumn{3}{c|}{\textbf{CIFAR100}} & \multicolumn{3}{c}{\textbf{ImageNet-100}} \\ \hline
$\eta$ for $\mathcal{L}_{R}$ & \multicolumn{1}{c}{\textbf{\;\;\;\;All\;\;\;\;}} & \multicolumn{1}{c}{\textbf{\;Known\;}} & \multicolumn{1}{c|}{\textbf{\;Unknown\;}} & \multicolumn{1}{c}{\textbf{\;\;\;\;All\;\;\;\;}} & \multicolumn{1}{c}{\textbf{\;Known\;}} & \multicolumn{1}{c|}{\textbf{\;Unknown\;}} & \multicolumn{1}{c}{\textbf{\;\;\;\;All\;\;\;\;}} & \multicolumn{1}{c}{\textbf{\;Known\;}} & \multicolumn{1}{c|}{\textbf{\;Unknown\;}} & \multicolumn{1}{c}{\textbf{\;\;\;\;All\;\;\;\;}} & \multicolumn{1}{c}{\textbf{\;Known\;}} & \multicolumn{1}{c}{\textbf{\;Unknown\;}} \\ \hline
0.01 & 51.4 & 66.5 & 43.8 & 64.8 & 75.2 & 59.6 & 46.2 & 67.2 & 36.1 & 43.2 & 57.2 & 35.6 \\
0.02 & 51.9 & 67.3 & 44.1 & 66.7 & 77.2 & 61.4 & 45.9 & 67.4 & 35.5 & 43.2 & 57.4 & 35.5 \\
\rowcolor{mygray}0.03 & 51.9 & 67.4 & 44.0 & 66.8 & 77.3 & 61.5 & 45.8 & 67.5 & 35.3 & 43 & 57.4 & 35.2 \\
0.04 & 51.8 & 66.8 & 44.2 & 66.5 & 76.3 & 61.7 & 45.7 & 66.7 & 35.5 & 43.2 & 57.4 & 35.5 \\
0.05 & 51.4 & 66.9 & 43.6 & 65.7 & 76.7 & 60.2 & 45.8 & 66.4 & 35.9 & 42.8 & 57.5 & 34.8 \\ \hline
\multicolumn{1}{l|}{Setting} & \multicolumn{3}{c|}{\textbf{Average}} & \multicolumn{3}{c|}{\textbf{CIFAR10}} & \multicolumn{3}{c|}{\textbf{CIFAR100}} & \multicolumn{3}{c}{\textbf{ImageNet-100}} \\ \hline
$\eta$ for $\mathcal{L}_{R}$ & \multicolumn{1}{c}{\textbf{\;\;\;\;All\;\;\;\;}} & \multicolumn{1}{c}{\textbf{\;Known\;}} & \multicolumn{1}{c|}{\textbf{\;Unknown\;}} & \multicolumn{1}{c}{\textbf{\;\;\;\;All\;\;\;\;}} & \multicolumn{1}{c}{\textbf{\;Known\;}} & \multicolumn{1}{c|}{\textbf{\;Unknown\;}} & \multicolumn{1}{c}{\textbf{\;\;\;\;All\;\;\;\;}} & \multicolumn{1}{c}{\textbf{\;Known\;}} & \multicolumn{1}{c|}{\textbf{\;Unknown\;}} & \multicolumn{1}{c}{\textbf{\;\;\;\;All\;\;\;\;}} & \multicolumn{1}{c}{\textbf{\;Known\;}} & \multicolumn{1}{c}{\textbf{\;Unknown\;}} \\ \hline
0.01 & 87.0 & 92.8 & 82.7 & 95.1 & 97.7 & 93.8 & 82.7 & 85.5 & 77.1 & 83.2 & 95.2 & 77.2 \\
0.02 & 87.0 & 92.8 & 82.7 & 95.2 & 97.7 & 94 & 82.6 & 85.5 & 76.9 & 83.2 & 95.2 & 77.2 \\
\rowcolor{mygray}0.03 & 87.1 & 92.7 & 82.8 & 95.3 & 97.6 & 94.1 & 82.7 & 85.4 & 77.2 & 83.2 & 95.2 & 77.2 \\
0.04 & 87.0 & 92.8 & 82.7 & 95.2 & 97.7 & 94 & 82.5 & 85.4 & 76.8 & 83.2 & 95.2 & 77.2 \\
0.05 & 87.0 & 92.8 & 82.7 & 95.2 & 97.7 & 94 & 82.6 & 85.5 & 76.9 & 83.2 & 95.2 & 77.2 \\
\bottomrule
\end{tabular}%
}
\end{table}


\begin{table}
\centering
\caption{Semantic transformation: Comparison between different transformations. All results better than the baseline are highlighted in \textbf{bold}. }
\label{tab:mlp_tranformtions}
\resizebox{\textwidth}{!}{%
\begin{tabular}{l|ccc|ccc|ccc} 
\toprule
                                                   & \multicolumn{3}{c|}{\textbf{CUB}}             & \multicolumn{3}{c|}{\textbf{Stanford Cars}}   & \multicolumn{3}{c}{\textbf{Herbarium19}}       \\ 
\midrule
Setting                                            & \;\;\;\;All\;\;\;\;           & Known         & Unknown       & \;\;\;\;All\;\;\;\;           & Known         & Unknown       & \;\;\;\;All\;\;\;\;           & Known         & Unknown        \\ 
\midrule
Baseline                                           & 62.7          & 75.7          & 56.2          & 43.1          & 66.9          & 31.6          & 42.3          & 56.1          & 34.8           \\ 
\hline
Linear layer with bias & 52.6 & 67.2 & 45.3 & \textbf{45.6} & 66.4 & \textbf{35.6} & 40.9 & 55.6 & 33.0 \\

Two linear layers   with bias                                   & 0.7           & 0.0           & 1.0           & 1.1           & 0.0           & 1.6           & 2.2           & 0.0           & 3.4            \\

\rowcolor{mygray}\textbf{Ours}: Learned input conditioned bias only & \textbf{66.8} & \textbf{77.3} & \textbf{61.5} & \textbf{45.8} & \textbf{67.5} & \textbf{35.3} & \textbf{43.0} & \textbf{57.4} & \textbf{35.2}  \\ 
\midrule
                                                   & \multicolumn{3}{c|}{\textbf{CIFAR10}}         & \multicolumn{3}{c|}{\textbf{CIFAR100}}        & \multicolumn{3}{c}{\textbf{ImageNet-100}}      \\ 
\midrule
Setting                                            & All           & Known         & Unknown       & All           & Known         & Unknown       & All           & Known         & Unknown        \\ 
\midrule
Baseline                                           & 94.7          & 97.4          & 93.3          & 78.3          & 84.2          & 66.5          & 83.1          & 95.3          & 77             \\ 
\hline
Linear layer with bias & 81.2 & 58.3 & 92.6 & 77.6 & \textbf{84.7} & 63.4 & \textbf{85.1} & 95.2 & \textbf{80} \\

Two linear layers with bias                                 & 13.3          & 0.0           & 20.0          & 1.7           & 0.0           & 5.0           & 1.4           & 0.0           & 2.1            \\

\rowcolor{mygray}\textbf{Ours}: Learned input conditioned bias only & \textbf{95.3} & \textbf{97.6} & \textbf{94.1} & \textbf{82.7} & \textbf{85.4} & \textbf{77.2} & \textbf{83.2} & 95.2          & \textbf{77.2}  \\
\bottomrule

\end{tabular}
}
\end{table}

\end{document}